\def\eqref#1{equation~\ref{#1}}
\def\1{\bm{1}}
\def\rvb{{\mathbf{b}}}
\def\rvx{{\mathbf{x}}}
\def\mA{{\bm{A}}}
\def\mB{{\bm{B}}}
\def\mO{{\bm{O}}}
\DeclareMathAlphabet{\mathsfit}{\encodingdefault}{\sfdefault}{m}{sl}
\SetMathAlphabet{\mathsfit}{bold}{\encodingdefault}{\sfdefault}{bx}{n}
\newcommand{\R}{\mathbb{R}}
\theoremstyle{plain}
\newtheorem{theorem}{Theorem}[section]
\newtheorem{proposition}[theorem]{Proposition}
\newtheorem{lemma}[theorem]{Lemma}
\newtheorem{corollary}[theorem]{Corollary}
\theoremstyle{definition}
\newtheorem{definition}[theorem]{Definition}
\theoremstyle{remark}
\newenvironment{packeditemize}{
\begin{list}{$\bullet$}{
\setlength{\labelwidth}{8pt}
\setlength{\itemsep}{0pt}
\setlength{\leftmargin}{\labelwidth}
\addtolength{\leftmargin}{\labelsep}
\setlength{\parindent}{0pt}
\setlength{\listparindent}{\parindent}
\setlength{\parsep}{0pt}
\setlength{\topsep}{3pt}}}{\end{list}}
\newcommand{\alg}{\texttt{2D-Shapley}} 
\newcommand{\algmc}{\texttt{2D-Shapley-MC}} 
\newcommand{\algknn}{\texttt{2D-Shapley-KNN}} 
\newcommand{\oned}{\texttt{1D-Shapley}} 
\newtheorem{axiom}{Axiom}
\newtheorem{example}{Example}
\newcommand{\xiangyu}[1]{\textbf{\textcolor{red}{[Xiangyu: #1]}}}
\newcommand{\hoang}[1]{\textbf{\textcolor{violet}{[Hoang: #1]}}}
\newcommand{\edit}[1]{{\textcolor{black}{#1}}}
\icmltitlerunning{\alg: A Framework for Fragmented Data Valuation}
\begin{document}

\twocolumn[
\icmltitle{\alg: A Framework for Fragmented Data Valuation}



\icmlsetsymbol{equal}{*}
\begin{icmlauthorlist}
\icmlauthor{Zhihong Liu}{equal,yyy}
\icmlauthor{Hoang Anh Just}{equal,comp}
\icmlauthor{Xiangyu Chang}{yyy}
\icmlauthor{Xi Chen}{sch}
\icmlauthor{Ruoxi Jia}{comp}
\end{icmlauthorlist}

\icmlaffiliation{yyy}{Center for Intelligent Decision-Making and Machine Learning, Department of Information Systems and Intelligent Business, School of Management, Xi'an Jiaotong University, Xi'an, 710049, China.}
\icmlaffiliation{comp}{Bradley Department of Electrical and Computer Engineering, Virginia Tech, Virginia, USA.}
\icmlaffiliation{sch}{Department of Technology, Operations, and Statistics, Stern School of Business, New York University, New York, 10012, USA}
\icmlcorrespondingauthor{Xiangyu Chang}{ xiangyuchang@xjtu.edu.cn}
\icmlcorrespondingauthor{Xi Chen}{xc13@stern.nyu.edu}
\icmlcorrespondingauthor{Ruoxi Jia}{ruoxijia@vt.edu}

\icmlkeywords{Machine Learning, ICML, XAI}

\vskip 0.3in]


\printAffiliationsAndNotice{\icmlEqualContribution}  

\begin{abstract}
Data valuation---quantifying the contribution of individual data sources to certain predictive behaviors of a model---is of great importance to enhancing the transparency of machine learning and designing incentive systems for data sharing. 
Existing work has focused on evaluating data sources with the shared feature or sample space. 
How to valuate fragmented data sources of which each only contains partial features and samples remains an open question. 
We start by presenting a method to calculate the counterfactual of removing a fragment from the aggregated data matrix. 
Based on the counterfactual calculation, we further propose $\alg$, a theoretical framework for fragmented data valuation that uniquely satisfies some appealing axioms in the fragmented data context. 
$\alg$ empowers a range of new use cases, such as selecting useful data fragments, providing interpretation for sample-wise data values, and fine-grained data issue diagnosis.


\end{abstract}

\section{Introduction}\label{sec:introduction}
Data are essential ingredients for building machine learning (ML) applications. 
The ability to quantify and measure the value of data is crucial to the entire lifecycle of ML: from cleaning poor-quality samples and tracking important ones to be collected during data preparation to setting proper proprieties over samples during training to interpret why certain behaviors of a model emerge during deployment. 
Determining the value of data is also central to designing incentive systems for data sharing and implementing current policies about the monetarization of personal data. 

\begin{figure}[t!]
    \centering
    \vspace{-0.5em}
    \includegraphics[width=0.42\textwidth]{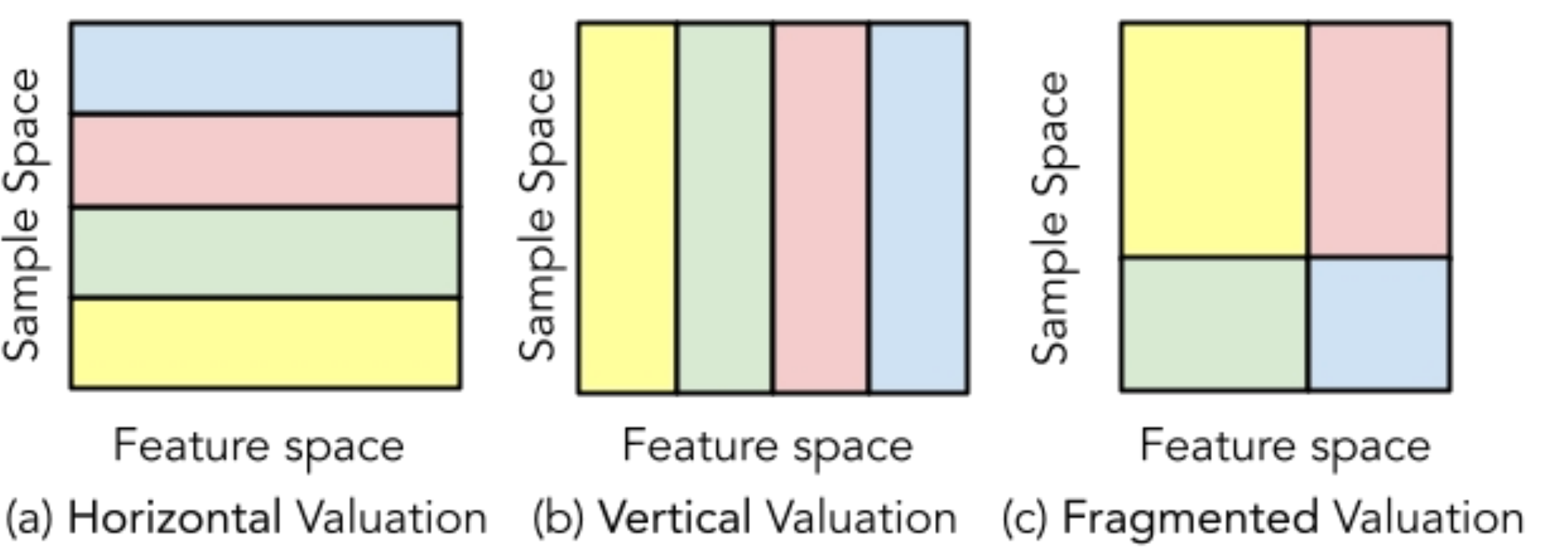}
     \vspace{-1em}
    \caption{Illustration of different data valuation settings based on how training set is partitioned among different data contributors. }
    \vspace{-1.5em}
    \label{fig:incomplete}
\end{figure}

Current literature of data valuation~\cite{jia2019towards,ghorbani2019data} has exclusively focused on valuing \textit{horizontally} partitioned data---in other words, each data source to be valued shares the same feature space. 
How to value \textit{vertically} partitioned data, where each data source provides a different feature but shares the same sample space, has been studied in the context of ML interpretability~\cite{covert2020understanding}.
However, none of these abstractions could fully capture the complexity of real-world scenarios, where data sources can have non-overlapping features and samples (termed as \textit{fragmented} data sources hereinafter). 
 
\begin{example}\label{Ex1}
Consider two banks, $B_1$ and $B_2$, and two e-commerce companies, $E_1$ and $E_2$, located in Region $1$ and $2$. 
These four institutions are interested in collaboratively building an ML model to predict users' credit scores with their data. 
Due to the geographical difference, $B_1$ and $E_1$ have a different user group from $B_2$ and $E_2$. 
Also, due to the difference in business, $B_1$ and $B_2$ provide different features than what $E_1$ and $E_2$ can offer. 
Overall, the four institutions partition the aggregated data horizontally and vertically, as illustrated by Figure~\ref{fig:incomplete}(c). 
\emph{How to quantify each institution's contribution to the joint model training?}

\end{example}

\begin{example}\label{Ex2}
Due to inevitable errors occurring during the data generation and collection processes, real-world data are seldom high quality. 
Suppose that a data analyst is interested in identifying some potentially erroneous entries in a dataset. 
Existing horizontal and vertical data valuation tools can help locate the rows or columns that could contain errors by returning the ones with the lowest values. 
Nevertheless, can we perform more fine-grained detection---\emph{e.g., how to pinpoint the coordinate of erroneous entries?} 
\end{example}

\begin{example}\label{Ex3} Horizontal data valuation is now widely used to explain the importance of each sample to a learning outcome~\cite{tang2021data,karlavs2022data}.
But \emph{how can a data analyst further explain these sample importance scores---why a sample receives a certain importance score?} Is a sample ``low-quality'' because it contains several ``moderate low quality'' features or an ``exceptionally low quality'' feature?

\end{example}

Answering the above questions calls for a quantitative understanding of how each \emph{block} in the data matrix (e.g. a sub-matrix as in Ex.~\ref{Ex1} or a single entry as in Ex.~\ref{Ex2} and \ref{Ex3}) contributes to the outcome of learning. 

\noindent
\textbf{Technical Challenges.} The problem of block valuation requires rethinking about fundamental aspects of data valuation. 
Existing data valuation theory consists of two basic modules at a conceptual level: (1) \emph{Counterfactual Analysis}, where one calculates how the utility of a subset of data sources would change after the source to be valued is removed; and (2) \emph{Fair Attribution}, where a data source is valued based on a weighted average of its marginal utilities for different subsets and the weights are set for the value to satisfy certain \emph{fairness} properties. 
The fairness notion considered by the past valuation schemes requires that permuting the order of different data sources does not change their value.

For horizontal and vertical valuation, the counterfactual can be simply calculated by taking the difference between the model performance trained on a subset of columns or rows and the performance with one column or row being removed. 
However, it is unclear how to calculate the counterfactual when a block is excluded because the remaining data matrix could be incomplete. Besides, the fairness notion of existing data value notions is no longer appropriate in the context of block valuation. 
As a concrete example to illustrate this point, consider Figure~\ref{fig:incomplete}(c) and suppose the two blocks on the left provide temperature measurements as features and the ones on the right are humidity measurements. 
In this case, one should not expect the value to be unchanged when two blocks with different physical meanings (e.g., yellow and pink) are swapped.

\noindent
\textbf{Contributions.} This paper presents the first focused study on data valuation without assuming shared feature space or sample space. 
Toward that end, we make the following contributions. 
\vspace{-1em}
\begin{itemize}
\itemsep-0.3em 
    \item We present an approach that enables evaluation of the marginal contribution of a block within the data matrix to any other block with non-overlapping sample and feature spaces.
    \item We abstract the block valuation problem into a two-dimensional (2D) cooperative game, where the utility function is invariant to column permutations and row permutations but not to any arbitrary entry permutations.
    \item We propose axioms that a proper valuation scheme should satisfy in the 2D game and show that the axioms lead to a unique representation of the value assignment (referred to as $\alg$). 
    Particularly, this representation is a natural generalization of the Shapley value~\cite{shapley1997value}---a celebrated value attribution scheme widely used in data valuation among other applications.
    \item We demonstrate that $\alg$ enables new applications, including selecting useful data fragments, providing interpretation for sample-wise data values, and fine-grained data issue diagnosis.
\end{itemize}

\vspace{-0.5em}
\section{Background and Related Work}\label{sec:related_work}

In a typical setting, a set of data sources are used to learn an ML model, which achieves a certain performance score. 
The goal of data valuation is to quantify the contribution of each data source toward achieving the performance score. The definition of a data source depends on the context in which the data valuation results are utilized. 
For instance, when using data valuation to interpret how the global behavior of the ML model depends on individual samples or individual features, a sample or a feature in the training data is regarded as a data source; when using data valuation to inform the reward design for data sharing, the collection of all samples or all features contributed by the same entities is regarded as a data source. 

Formally, let $N=\{1,\ldots,n\}$ denotes the index set of $n$ training data sources. A data valuation scheme assigns a score to each training data source in a way that reflects their contribution.
These scores are referred to as \textit{data values}. 
To analyze a source's ``contribution'', we define a \textit{utility function} $U: 2^N \rightarrow \R$, which maps any subset of the data sources to a score indicating the usefulness of the subset. 
$2^N$ represents the power set of $N$, i.e., the set of all subsets of $N$, including the empty set and $N$ itself. 
For the classification task, a common choice for $U$ is the performance of a model trained on the input subset, i.e., $U(S) = \text{acc}(\mathcal{A}(S))$, where $\mathcal{A}$ is a learning algorithm that takes a set $S\subseteq N$ of sources as input and returns a model, and $ \text{acc}$ is a metric function to evaluate the performance of a given model, e.g., the accuracy of a model on a hold-out validation set. 

Past research has proposed various ways to characterize data values given the utility function, among which the Shapley value is arguably the most widely used scheme for data valuation. 
The Shapley value is defined as 
\vspace{-0.5em}
{\small
\begin{equation}
\resizebox{.98\hsize}{!}{
$\psi_i^{1d}\left(U\right) \label{eqn:1d_shap} := \frac{1}{n} \sum_{k=1}^{n} \binom{n-1}{k-1}^{-1}  \sum_{\substack{S \subseteq N \setminus i\\ |S|=k-1}} \left[ U(S \cup i) - U(S) \right].$
}
\end{equation}
}

To differentiate from the proposed work, we will refer to the Shapley value defined in Eq.~(\ref{eqn:1d_shap}) as \oned. 
$\oned$ is popular due to its unique satisfaction of the following four axioms~\cite{shapley1953value}:
\begin{packeditemize}
    \item Dummy: if $U\left(S \cup i\right)=U(S)+c$ for any $S \subseteq N \setminus i$ and some $c \in \mathbb{R}$, then $\psi_i^{1d}\left(U\right)=c$.
    \item Symmetry: let $\pi:N\rightarrow N$ be any permutation of $N$ and $\pi U(S):= U(\pi(S))$, then $\psi_{\pi(i)}^{1d}(\pi U)=\psi_i^{1d}(U)$.
    \item Linearity: For utility functions $U_1, U_2$ and any $\alpha_1, \alpha_2 \in \R$, $\psi_i^{1d} \left(\alpha_{1} U_{1}+\alpha_{2} U_{2}\right)=\alpha_{1} \psi_i^{1d}\left(U_{1}\right)+$ $\alpha_{2} \psi_i^{1d}\left(U_{2}\right)$.
    \item Efficiency: for every $U, \sum_{i \in N} \psi_i^{1d}( U)=U(N)$.
\end{packeditemize}
The symmetry axiom embodies fairness. 
In particular, $\pi U$ arises upon the reindexing of data sources $1,\ldots,n$ with the indices $\pi(1),\ldots,\pi(n)$; the symmetry axiom states that the evaluation of a particular position should not depend on the indices of the data sources. 

Although the Shapley value was justified through these axioms in prior literature, the necessity of each axiom depends on the actual use case of data valuation results. 
Recent literature has studied new data value notions obtained by relaxing some of the aforementioned axioms and enabled improvements in terms of accuracy of bad data identification~\cite{kwon2022beta}, robustness to learning stochasticity~\cite{wang2023banzhaf,wu2022robust}, and computational efficiency~\cite{yan2021if}. 
For instance, relaxing the efficiency axiom gives rise to semi-values~\cite{kwon2022beta,wang2023banzhaf}; relaxing the linearity axiom gives rise to least cores~\cite{yan2021if}. 
This paper will focus on generalizing $\oned$ to block valuation. As we will expound on later, $\oned$ faces two limitations to serve a reasonable notion for block-wise values. 
Note that $\oned$ and the aforementioned relaxed notions share a similar structure: all of them are based on the marginal utility of a data source.
Hence, our effort to generalize the $\oned$ to new settings can be adapted to other more relaxed notions.




Another line of related work focuses on developing efficient algorithms for data valuation via Monte Carlo methods~\cite{jia2019towards,lin2022measuring}, via surrogate utility functions such as $K$-nearest-neighbors~\cite{jia2019efficient}, neural tangent kernels~\cite{wu2022davinz}, and distributional distance measures~\cite{hoang2023lava,tay2022incentivizing}, and via reinforcement learning~\cite{yoon2020data}. 
These ideas can also benefit the efficient computation of the proposed $\alg$. 
As a concrete example, this paper builds upon Monte Carlo simulation and surrogate model approaches to improve the efficiency of $\alg$.

Beyond data valuation, $\oned$ has been extensively used to gain feature-based interpretability for black-box models locally and globally. The local interpretability methods~\cite{lundberg2017unified,strumbelj2010efficient} focus on analyzing the relative importance of features for each input separately; therefore, the importance scores of features across different samples are not comparable. 
By contrast, our work allows the comparison of feature importance across different samples. 
The global interpretability methods~\cite{covert2020understanding}, on the other hand, explain the model’s behavior across the entire dataset. 
In the context of this paper, we consider them vertical data valuation. Compared to global interpretability methods, our work provides a more fine-grained valuation by associating each entry of the feature with an importance score. 
Our work improves the interpretability of the global feature importance score in the sense that it reveals the individual sample's contribution to the importance of a feature.

\section{How to Value a Block?}\label{sec:2d_shapley}

\newcommand{\sn}{\mathbb{S}_N} 
\newcommand{\fm}{\mathbb{F}_M} 

This section starts with formulating the block valuation problem. 
Then, we will discuss the challenges of using $\oned$ to tackle the block valuation problem in terms of both counterfactual analysis and fair attribution. 
At last, we will present our proposed framework for solving the block valuation problem.

\subsection{Problem Formulation}


Let $N = \{1,2,\cdots, n \}$ and $M = \{1,2,\dots, m\}$, indexing $n$ disjoint collection of samples and $m$ disjoint collection of features contributed by $nm$ sources (or blocks). 
Each data source can be labeled by $(i,j)$ for $i\in N$ and $j\in M$, where we call $i$ the sample-wise index and $j$ the feature-wise index. To measure the contribution of a data source, we need to define a utility function, which measures the usefulness of a subset of data sources. 
The utility function $h(S,F)$ takes in two separate sets $S\subseteq N$ and $F\subseteq M$ as the variables and returns a real-valued score indicating the utility of $\{(i,j)\}_{i\in S,j\in F}$. Note that this paper focuses on valuing the relative importance of feature blocks; that is, we assume that each data contributor provides a block of features and then the aggregation of features will be annotated by a separate entity (e.g., a data labeling company) that does not share the profit generated from joint training. More formally, we define the utility function as follows:
\vspace{-0.5em}
\begin{align}
    &h(S,F):= \text{Performance of the model trained on the } \nonumber\\&\text{feature blocks }\{(i,j)\}_{i\in S,j\in F} \text{ after annotation.} \nonumber
\end{align}
\vspace{-1em}

One can potentially generalize our framework to jointly value feature and label blocks by redefining the utility function to be non-zero only when feature and label are both included in the input block, like~\cite{jia2019efficient,yona2021s}, but an in-depth investigation is deferred to future work.



The benefit of this utility function definition is two-fold. 
\emph{First}, its two-dimensional index always corresponds to a data fragment with the same feature space for all samples inside. 
As a result, one can calculate the utility in a straightforward manner by training on the matrix and evaluating the corresponding performance. This is an essential advantage over the one-dimensional index utilized by $\oned$, as will be exemplified later. 
\emph{Second}, created this way, the utility function is invariant to permutations of sample-wise indices in $S$ for any given $F$ and permutations of feature-wise indices in $F$ for any given $S$, but not to permutations of the sample-wise and feature-wise indices combined. 
This is a desirable property as for many data types in ML, such as tabular data, one would expect that swapping samples or swapping features~\footnote{Swapping features in an image dataset may lead to the loss of certain local information. However, it is rare that different pixel positions of an image dataset are contributed by different entities. So we will not consider this case.} does not change the model performance, yet swapping any two entries in the matrix may lead to arbitrary errors and thus alter the model performance significantly. 

Our goal is to assign a score to each block in $\{(i,j)\}_{i\in N,j\in M}$ that measures its contribution to the outcome of joint learning $h(N,M)$.

\vspace{-0.5em}
\subsection{A Naive Baseline: $\oned$} 

One idea to tackle the block valuation problem is to flatten the indices of blocks into one dimension and leverage $\oned$ to value each block. Specifically, we can reindex $\{(i,j)\}_{i\in N,j\in M}$ by $T=\{1,\ldots,nm\}$. 
Note that this step discards the structural information contained in the two-dimensional indices. 
Then, one can utilize Eq.~(\ref{eqn:1d_shap}) to value each $i\in T$.

The second step of applying Eq.~(\ref{eqn:1d_shap}) requires calculating $U(S\cup i)-U(S)$ for any $S\subseteq T\setminus i$. 
Both $S$ and $S\cup i$ could correspond to a data fragment with samples differing in their feature space (see example in Figure~\ref{fig:1d_shapley_shuffle}); nevertheless, how to evaluate the utility of such a fragment is unclear. 
An ad hoc way of addressing this problem is to perform missing value imputation, e.g., filling out the missing values of a feature using the average of the feature values present. 

In addition to the difficulty of evaluating the counterfactual, the symmetry axiom satisfied by $\oned$ no longer has the correct fairness interpretation when the input indices are flattened from 2D ones. 
In that case, $1,\ldots,nm$, carry specific meanings entailed by the original 2D structure; e.g., some indices might correspond to temperature features, and others might correspond to humidity. 
Hence, the symmetry axiom that requires unchanged data values after permuting the data sources' indices is not sensible and necessary, as the permutation might map the content of a data source from one meaning to an entirely different one.

We will use $\oned$ with missing value imputation as a baseline for our proposed approach. 
This simple baseline is still a useful benchmark to assess the extra (non-trivial) gains in different application scenarios that our approach can attain.





\begin{figure}[htb]
\vspace{-0.5em}
\begin{center}
  \includegraphics[width=0.46\textwidth]{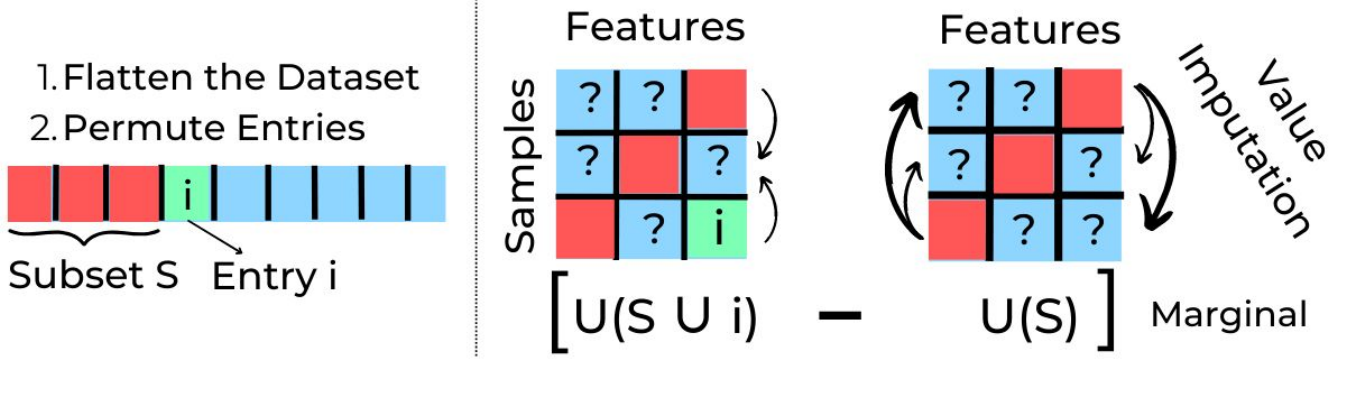}
    \vspace{-1em}
  \caption{A visualization of $\oned$ marginal contribution applied to sample-feature valuation.}~\label{fig:1d_shapley_shuffle}
  \end{center}
  \vspace{-2em}
\end{figure}




\subsection{Our Approach: $\alg$}
Here, we will describe $\alg$ as a principled framework for valuing data blocks. 
We will emphasize how $\alg$ overcomes the challenges of the $\oned$ baseline in terms of (1) calculating the counterfactual, (2) framing the correct fairness principles, and then derive the representation of the data values based on the new counterfactual analysis and principle. 
At last, we will show efficient algorithms to compute $\alg$.

\subsubsection{Two-Dimensional Counterfactual Analysis}


Given a two-dimensional utility function $h(\cdot, \cdot)$, we will define the marginal contribution of a block $(i,j)$ to the collection of blocks $\{(i,j)\}_{i\in S,j\in F}$ as 
{\small
    \begin{align}
	    M_h^{i,j}(S, F):=&h(S\cup i, F\cup j)+h(S, F) \nonumber \\
	    -&h(S\cup i, F)-h(S, F\cup j).\label{eq:margin}
	\end{align}

\vspace{-0.0em}
}
\setlength{\intextsep}{1em}%
\setlength{\columnsep}{1em}%

The rationality of the definition of $M_h^{i,j}(S, F)$ can be shown by Figure \ref{fig:2d_shapley}.
The area corresponding to $h(S\cup i, F\cup j)$ can be viewed as the area $(S\cup i, F\cup j)$, which subtracts these two areas of $(S\cup i, F)$ and $(S, F\cup j)$, plus the $(S, F)$ area that is subtracted twice, the remaining area is shown in Figure \ref{fig:2d_shapley} as ``marginal'', which corresponds to the marginal influence of the block $(i,j)$.  
\begin{wrapfigure}{r}{0.36\columnwidth}
 \vspace{-1.5em}
  \begin{center}
    \includegraphics[width=0.38\columnwidth]{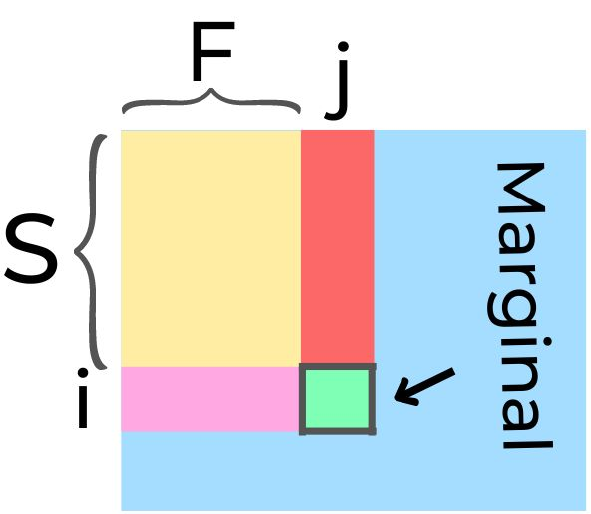}
  \end{center}
  \vspace{-1.0em}
  \caption{Removal process and marginal influence of $(i,j)$.}~\label{fig:2d_shapley}
  \vspace{-2em}
\end{wrapfigure} 
The unique advantage is that each individual utility is well-defined as it takes as input a collection of blocks within which the samples all share same feature space.

\subsubsection{Axioms for Block Valuation}

We start by redefining ``dummy'' for block valuation, where the underlying utility function is 2D.

\begin{definition}(2D-Dummy)
	    We call a block $(i,j)$ a 2D-dummy under utility function $h$ if for all $S\subseteq N\backslash i$ and $F\subseteq M\backslash j$, 
	    \begin{equation}\label{def:dummy}
	         M_{h}^{i,j}(S,F)=c, c\in \mathbb{R}.
	    \end{equation}
\end{definition}
$2D$-dummy implies the canonical (one-dimensional) dummy mentioned in Section~\ref{sec:related_work}. Specifically, if sample $i$ is a sample dummy which satisfies $h(S\cup i, F)=h(S,F)+c_1$ and $h(S\cup i, F\cup j)=h(S,F\cup j)+c_2$ for $S\subseteq N\backslash i,F\subseteq M\backslash j$ like the dummy defined in $\oned$, then Eq.~(\ref{def:dummy}) is satisfied with $c:=c_2-c_1$, and similarly, if feature $j$ is a feature dummy which satisfies $h(S,F\cup j)=h(S,F)+c_1'$ and $h(S\cup i,F\cup j)=h(S\cup i,F)+c_2'$ for $S\subseteq N\backslash i,F\subseteq M\backslash j$, then Eq.~(\ref{def:dummy}) is also satisfied with $c:=c_2'-c_1'$. However,  Eq.~(\ref{def:dummy}) can not imply sample $i$ is a sample dummy or feature $j$ is a feature dummy. 

We first define the utility function set $G$ which contains all possible utility functions, and define a value function $\psi: G \rightarrow \mathbb{R}^{n\times m}$ and denote the value of block $(i,j)$ as $\psi_{ij}(h)$ which is the $ij$th element in matrix $\psi(h)$. In order to build an equatable evaluation system, we provide the following axioms. 

\begin{axiom}(2D-Linearity)\label{axiom: linear}
	For any two utility functions $h_1, h_2\in G$ and any $\beta_1, \beta_2\in \mathbb{R}$,
	\begin{equation}
	  \psi_{ij}(\beta_1h_1+\beta_2h_2)=\beta_1\psi_{ij}(h_1)+\beta_2\psi_{ij}(h_2).
	\end{equation}
\end{axiom}
\begin{axiom}(2D-Dummy)\label{axiom: dummy}
    If the block $(i,j)$ is a dummy of $h$ which satisfies Eq.~(\ref{def:dummy}), then $\psi_{ij}(h)=c$.
\end{axiom}
\begin{axiom}(2D-Symmetry)\label{axiom: symmetry}
    Let $\pi_1: N\rightarrow N$ and $\pi_2: M\rightarrow M$ be two permutations, then:
    \begin{equation}
        \psi_{\pi_1(i)\pi_2(j)}[(\pi_1\pi_2)h]=\psi_{ij}(h),
    \end{equation}
    where for all $S\subseteq N, F\subseteq M$,
    \begin{equation}
        [(\pi_1\pi_2)h](S, F):=[(\pi_2\pi_1)h](S, F):=h(\pi_1(S), \pi_2(F)).
    \end{equation}
\end{axiom}
\begin{axiom}(2D-Efficiency)\label{axiom: efficiency}
    For every utility function $h\in G$, 
    {\small
    \begin{equation}
         \sum_{\substack{i\in N\\j\in M}}\psi_{ij}(h)=h(N, M).
    \end{equation}
    }
\end{axiom}

Let us discuss the rationality of the four axioms.

   The \underline{2D-linearity axiom} is inherited from $\oned$, which implies that the value of the $(i,j)$-th block under the sum of two ML performance measures is the sum of the value under each performance measure. 
    
    The \underline{2D-dummy axiom} can be interpreted by taking $c=0$. If a block has no contribution to the ML task, no matter what the situation (i.e., for any $S\subseteq N\backslash i$ and $F\subseteq M\backslash j$), then its value is zero. 
    
   

In the \underline{2D-symmetry axiom}, the rows and columns are permuted independently. As a result, the entries from the same feature will always remain in the same column. The axiom state that such permutations would not change the value for individual data blocks, which is what we would expect in many ML applications.
    In Appendix \ref{appendix: proof for equivalence}, we proved that Axiom \ref{axiom: symmetry} is implied by explanation here.
    
   The \underline{2D-efficiency axiom} is inherited from $\oned$, requiring that the sum of the values of all the data blocks equals the performance of the whole data set. 

Based on the axioms, we provide a definition:

\begin{definition}
    The value $\psi_{ij}(h)$ with respect to the utility function $h$ is a \textbf{two-dimensional Shapley value} ($\alg$ for short) if $\psi_{ij}$ satisfies the 2d-linearity, 2d-dummy, 2d-symmetry  and 2d-efficiency axioms, denoting as $\psi_{ij}^{2d}$.
\end{definition}
$\alg$ can be seen as the two-dimensional extension of Shapley values, which inherits its advantage with a natural adaptation of the dummy and symmetry axiom to the two-dimensional utility function scenario.

\subsubsection{Representation Theory}
We will show that there exists an analytic and unique solution for \alg. 

\begin{theorem}(Representation Theory of $\alg$)\label{thm: Representation of 2D Shapley-value}
The $\psi^{2d}_{ij}$ has a unique solution: 
\begin{equation}\label{eqn: 2D Shapley}
     \psi_{ij}^{2d}=\frac{1}{nm}\sum_{s=1}^n\sum_{f=1}^m\Delta_{sf},
\end{equation}
where $i\in N$, $j\in M$,
 \begin{equation}\label{Eq:marginal_contribution}
         \begin{aligned}
              \Delta_{sf} = \frac{1}{\tbinom{n-1}{s-1}\tbinom{m-1}{f-1}}\sum_{(S, F)\in D_{sf}^{ij}}&M_h^{i,j}(S, F),
         \end{aligned}
     \end{equation}
$
	    D_{sf}^{ij} = \{(S, F): S\subseteq N\backslash i, F\subseteq M\backslash j, |S|=s-1, |F|=f-1\},
$   
and $M_h^{i,j}(S, F)$ defined in Eq.~(\ref{eq:margin}).
\end{theorem}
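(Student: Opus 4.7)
The plan is to follow the classical Shapley axiomatization strategy, adapted to the two-dimensional setting. The key idea is that 2D-linearity lets me reduce the problem to computing $\psi_{ij}$ on a basis of ``simple'' utility functions, on which 2D-dummy, 2D-symmetry, and 2D-efficiency pin the values down uniquely; then existence is verified by checking the explicit formula against all four axioms.

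\textbf{Uniqueness via a 2D unanimity basis.} For each pair of nonempty subsets $T_1 \subseteq N$ and $T_2 \subseteq M$, I would introduce the \emph{2D unanimity game} $u_{T_1,T_2}(S,F) := \mathbb{1}[T_1 \subseteq S]\,\mathbb{1}[T_2 \subseteq F]$. By Möbius inversion on the product lattice $2^N \times 2^M$, every utility $h \in G$ admits a unique linear decomposition $h = \sum_{T_1,T_2} c_{T_1,T_2}\, u_{T_1,T_2}$ with explicit coefficients $c_{T_1,T_2} = \sum_{S\subseteq T_1,\,F\subseteq T_2}(-1)^{|T_1\setminus S|+|T_2\setminus F|} h(S,F)$. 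A direct expansion of Eq.~(\ref{eq:margin}) gives
\begin{equation}
M_{u_{T_1,T_2}}^{i,j}(S,F) = \bigl(\mathbb{1}[T_1 \subseteq S\cup i]-\mathbb{1}[T_1\subseteq S]\bigr)\bigl(\mathbb{1}[T_2\subseteq F\cup j]-\mathbb{1}[T_2\subseteq F]\bigr),
\end{equation}
so if $i\notin T_1$ or $j\notin T_2$ the block $(i,j)$ is 2D-dummy with $c=0$, and Axiom~\ref{axiom: dummy} forces $\psi_{ij}(u_{T_1,T_2})=0$. For $(i,j)\in T_1\times T_2$, Axiom~\ref{axiom: symmetry} applied with permutations that fix $N\setminus T_1$ and $M\setminus T_2$ setwise shows all such blocks share a common value, and Axiom~\ref{axiom: efficiency} evaluated at $u_{T_1,T_2}(N,M)=1$ pins it down to $1/(|T_1|\,|T_2|)$. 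Linearity then determines $\psi_{ij}(h)$ for every $h$, establishing uniqueness.

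\textbf{Existence via verifying the explicit formula.} I would then verify that the right-hand side of Eq.~(\ref{eqn: 2D Shapley}) satisfies all four axioms. 2D-linearity and 2D-dummy are immediate from the form of $M_h^{i,j}$ and the observation that if $M_h^{i,j}(S,F)\equiv c$ then every $\Delta_{sf}$ equals $c$, so the average is $c$. 2D-symmetry follows by relabeling the summation sets $D_{sf}^{ij}$ under $\pi_1,\pi_2$. The main obstacle is 2D-efficiency: showing $\sum_{i,j}\psi_{ij}^{2d}(h)=h(N,M)$. I would re-index the inner sum by $(S',F'):=(S\cup i,\,F\cup j)$ and collect terms of $M_h^{i,j}$ by the ``corner'' values $h(S',F')$, $h(S',F'\setminus j)$, $h(S'\setminus i,F')$, $h(S'\setminus i,F'\setminus j)$, then telescope: first over $s$ at each fixed $f$, which collapses the $s$-slices to $h(N,F')-h(N,F'\setminus j)-h(\emptyset,F')+h(\emptyset,F'\setminus j)$, then over $f$, which collapses to $h(N,M)-h(N,\emptyset)-h(\emptyset,M)+h(\emptyset,\emptyset)$. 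Under the natural boundary convention that training on an empty set of samples or features yields the same baseline utility (so the three terms other than $h(N,M)$ agree with the implicit baseline used throughout), this equals $h(N,M)$.

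\textbf{Cross-check via the basis computation.} As a consistency check I would substitute the Möbius expansion $\psi_{ij}(h) = \sum_{T_1\ni i,\,T_2\ni j} c_{T_1,T_2}/(|T_1||T_2|)$ obtained in the uniqueness argument and regroup by fixing $(s,f):=(|S|+1,|F|+1)$ for $S:=T_1\setminus i,\,F:=T_2\setminus j$. Collapsing the alternating sum over supersets using standard binomial identities should recover exactly the coefficient $1/\bigl(nm\binom{n-1}{s-1}\binom{m-1}{f-1}\bigr)$ in front of $M_h^{i,j}(S,F)$ in Eq.~(\ref{eqn: 2D Shapley}); this is the step with the heaviest bookkeeping, but it is essentially a two-dimensional version of the identity used in the classical Shapley formula derivation, so I expect it to go through cleanly. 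The overall hardest piece is the telescoping verification of 2D-efficiency, since one must be careful with the role of the empty-set boundary values in the 2D setting.
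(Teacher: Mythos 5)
Your argument is correct in substance, but it follows a genuinely different route from the paper's. You prove uniqueness by the classical unanimity-basis method: Möbius inversion on the product lattice $2^N\times 2^M$, the factorization $M^{i,j}_{u_{T_1,T_2}}(S,F)=(\mathbb{1}[T_1\subseteq S\cup i]-\mathbb{1}[T_1\subseteq S])(\mathbb{1}[T_2\subseteq F\cup j]-\mathbb{1}[T_2\subseteq F])$, and then dummy/symmetry/efficiency pinning $\psi_{ij}(u_{T_1,T_2})$ to $\mathbb{1}[(i,j)\in T_1\times T_2]/(|T_1||T_2|)$; existence is a direct verification of the explicit formula, with the only nontrivial axiom (efficiency) handled by a two-stage telescoping. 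The paper instead expands $h$ in the ``Dirac'' basis $W_{S,F}$ and uses linearity plus dummy to derive the general probabilistic-value form $\psi_{ij}(h)=\sum_{S,F}p^{ij}_{S,F}M_h^{i,j}(S,F)$ (Lemma~\ref{lemma:linearity and dummy}), uses symmetry to reduce the weights to functions of cardinalities (Lemma~\ref{lemma: symmetry}), translates efficiency into a linear system on the $p_{s,f}$ (Lemma~\ref{lemma: efficiency}), and establishes uniqueness by a rank argument on the resulting $nm\times nm$ coefficient matrix. Your route is shorter and sidesteps that linear-algebra computation entirely; the paper's route yields the intermediate ``weighted marginal'' characterization, which is what one would reuse to define 2D semivalues by dropping efficiency. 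Amusingly, the unanimity machinery you propose is essentially what the paper deploys in Appendix~\ref{appendix: proof for equivalence} for a different purpose (relating the symmetry axiom to its verbal explanation), including the identity $\psi_{ij}(h_{S,F})=1/(sf)$ and the Möbius decomposition, so your cross-check would reproduce known material.

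One point deserves care: the empty-set boundary. Your decomposition restricts to \emph{nonempty} $T_1,T_2$, but for a general $h$ the Möbius expansion also contains $u_{\emptyset,T_2}$ and $u_{T_1,\emptyset}$, and on those games every block is a 2D-dummy with $c=0$ while efficiency demands the values sum to $1$ --- the axioms are inconsistent there, and likewise your telescoped sum equals $h(N,M)-h(N,\emptyset)-h(\emptyset,M)+h(\emptyset,\emptyset)$, which is $h(N,M)$ only when $h(N,\emptyset)+h(\emptyset,M)-h(\emptyset,\emptyset)=0$. So the theorem implicitly requires the normalization $h(\cdot,\emptyset)=h(\emptyset,\cdot)=0$ (a ``same baseline'' convention with nonzero baseline does not suffice), and this normalization is exactly what licenses dropping the empty-indexed unanimity games in your uniqueness step. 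This is a hypothesis the paper also leaves implicit, so it is a shared caveat rather than a defect of your argument, but you should state it explicitly rather than bury it in the efficiency check.
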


Theorem \ref{thm: Representation of 2D Shapley-value} indicates that $\psi_{ij}^{2d}$ is a weighted average of the two-dimensional counterfactual in Eq. (\ref{eq:margin}).
Theorem \ref{thm: Representation of 2D Shapley-value} is referred to as the representation theory of $\alg$, because the proof procedure shows that $\psi_{ij}^{2d}$ has a basis expansion formulation (see Eq. (\ref{eq:basis_expansion}) in Appendix \ref{sec:appendix_representation_2dshapley}).
To show the basis expansion, a series of basic utility functions in $G$ needs to be defined (e.g., Eq. (\ref{eq:one_basis})).
Compared with the representation theory of $\oned$ by \citet{roth1988shapley}, one technical challenge is to define the basis and basic utility functions for the 2D case to handle the 2D counterfactual.
Furthermore, the proof of the uniqueness of $\alg$ has to solve a complex high-dimensional linear system (see Eq.~(\ref{linear recursive condition1}) in Appendix \ref{sec:appendix_representation_2dshapley}). Our proof incorporates new techniques, unseen in the classic proof of $\oned$, to deal with these unique technical challenges arising in the 2D context.




Moreover, the representation theory also implies that $\alg$ can be reduced to $\oned$. 
The following corollary shows that summing up the block values over all rows gives $\oned$ of features, and summing up the block values over all columns gives $\oned$ of samples. 
Corollary \ref{cor: 2d to 1d} does not only indicate that the $\alg$ is a natural generalization of $\oned$, but also is
useful for discussing the experimental results of how 2D values can explain 1D values (see Subsection \ref{subsec: cell_valu}). 
\begin{corollary}\label{cor: 2d to 1d}
For any $h\in G$, let $\psi^{1d}_{i\cdot}(h):=\sum_{j\in M}\psi^{2d}_{ij}(h)$ and $\psi^{1d}_{\cdot j}(h):=\sum_{i\in N}\psi^{2d}_{ij}(h)$, then 
{\small
\begin{equation}
    \psi^{1d}_{i\cdot}(h)=\frac{1}{n}\sum_{\substack{S\subseteq N\backslash i\\|S|=s}}\frac{1}{\tbinom{n-1}{s}}[h(S\cup i,M)-h(S,M)],
\end{equation}
}
and
{\small
\begin{equation}
    \psi^{1d}_{\cdot j}(h)=\frac{1}{m}\sum_{\substack{F\subseteq M\backslash j\\|F|=f}}\frac{1}{\tbinom{m-1}{f}}[h(N,F\cup j)-h(N, F)],
\end{equation}
}
which are in the form of \oned.
\end{corollary}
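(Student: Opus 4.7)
The plan is a direct calculation from Theorem~\ref{thm: Representation of 2D Shapley-value}: write out $\sum_{j\in M}\psi^{2d}_{ij}(h)$ using the explicit representation, swap orders of summation, and collapse the feature-side sums so that only the boundary term $h(\cdot,M)$ survives.

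The key auxiliary object I would introduce is $g_S(F):=h(S\cup i,F)-h(S,F)$, because then the 2D marginal factors as a 1D increment: $M_h^{i,j}(S,F)=g_S(F\cup j)-g_S(F)$, and crucially $g_S$ is independent of $j$. After this reformulation, the inner sum takes the form $\sum_{j\in M}\sum_{F\subseteq M\setminus j,\,|F|=f-1}[g_S(F\cup j)-g_S(F)]$. I would reparametrize via $F':=F\cup j$, converting pairs $(j,F)$ into pairs $(F',j)$ with $F'\subseteq M$, $|F'|=f$, and $j\in F'$. Counting multiplicities turns this expression into $f\sum_{|F'|=f}g_S(F')-(m-f+1)\sum_{|F''|=f-1}g_S(F'')$.

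Next I would weight by $1/\binom{m-1}{f-1}$ and sum over $f=1,\dots,m$. The central algebraic identity is $\frac{f}{\binom{m-1}{f-1}}=\frac{f!(m-f)!}{(m-1)!}$, while reindexing the negative piece by $k=f-1$ yields coefficient $\frac{(m-f+1)(f-1)!(m-f)!}{(m-1)!}=\frac{k!(m-k)!}{(m-1)!}$ on $\sum_{|F|=k}g_S(F)$. Hence the weighted contributions cancel for every interior $k\in\{1,\dots,m-1\}$. Only two boundary terms remain: the $k=m$ piece contributes $m\cdot g_S(M)=m[h(S\cup i,M)-h(S,M)]$, and the $k=0$ piece is $m\cdot g_S(\emptyset)=0$ under the convention $h(\cdot,\emptyset)\equiv 0$ (which the utility model requires, since no model can be trained without features). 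Folding this back into the outer sums over $s$ and $S$, the factor $m$ cancels the $1/(nm)$ prefactor, giving $\sum_{j\in M}\psi^{2d}_{ij}(h)=\frac{1}{n}\sum_{s=1}^{n}\binom{n-1}{s-1}^{-1}\sum_{S\subseteq N\setminus i,\,|S|=s-1}[h(S\cup i,M)-h(S,M)]$, which matches the claimed form of the 1D Shapley value of sample $i$ under the utility $S\mapsto h(S,M)$. The companion identity for $\psi^{1d}_{\cdot j}$ follows from the same argument with the roles of $(N,i)$ and $(M,j)$ interchanged, using $h(\emptyset,\cdot)\equiv 0$ for the vanishing boundary term.

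The main obstacle is the binomial bookkeeping in the middle step---in particular, verifying that the two coefficients on $\sum_{|F|=k}g_S(F)$ arising from the $f$-th and $(f+1)$-th summands genuinely agree so the telescoping is exact. Once that identity is pinned down and the empty-feature-set convention is invoked, the remainder of the argument is just re-indexing, and no further axiom beyond the closed-form representation in Theorem~\ref{thm: Representation of 2D Shapley-value} is needed.
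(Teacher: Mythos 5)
Your proposal is correct and follows essentially the same route as the paper's proof in Appendix C: a direct expansion of $\sum_{j\in M}\psi^{2d}_{ij}(h)$ from the representation theorem, a reindexing of the feature-side sums, and the observation that the weights satisfy $f\,p_{s,f-1}=(m-f)\,p_{s,f}$ so that all interior terms cancel and only the $F=M$ (and $F=\emptyset$) boundary survives. If anything, your treatment is slightly more careful than the paper's, which silently drops the $F=\emptyset$ term that, as you note, requires the convention that $h(\cdot,\emptyset)$ is independent of the sample set.
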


Finally, having the analytical expression Eq. (\ref{eqn: 2D Shapley}) of $\alg$ at hand will provide us with great convenience in designing efficient algorithms.

\subsubsection{Efficient Algorithm}


The computational complexity of exactly calculating $\alg$ is exponential in $mn$ due to the summation over all possible rows and columns.
To overcome this challenge, we develop a Monte Carlo approach to approximating $\alg$. The key idea is that $\alg$ can be rewritten as an expectation of the marginal contribution of $(i,j)$ to the blocks indexed by row indices before $i$ and column indices before $j$ over random permutations of rows and columns. 
As a result, we can approximate $\alg$ by taking an average over randomly sampled rows and columns. 
We also design the algorithm in ways that can reuse utility function evaluations across different permutations, which gives rise to significant efficiency gains. The full details of the algorithm design are provided in Appendix~\ref{sec:algo}, and the pseudo-code is shown in Algorithm~\ref{alg:2d-mc}. 



Evaluating the utility function requires retraining a model. For small-scale datasets, it might be possible to evaluate the utility function within a reasonable time multiple times, but for large-scale datasets, even evaluating it once might require days to finish. 
This would deem our method impractical for any applications. 
Nonetheless, we can even obviate all model training to compute our values when using $K$-nearest-neighbor (KNN) as a surrogate model. KNN-surrogate-based data valuation has shown great computational advantage while providing effective data quality identification~\cite{jia2019efficient}. 
In this work, we leverage a similar idea to reduce the computational complexity of $\alg$ for large models. First, let us observe from Eq.~(\ref{eqn: 2D Shapley}) and Corollary.~\ref{cor:eff_2d} that after rearranging inner terms, we have:

\vspace{-1em}
{\small
\begin{align} \label{eq:2dperm-arrange}
    &\psi_{ij}^{2d}=\frac{1}{n!m!}\sum_{\substack{\pi_1 \in \Pi(N) \\ \pi_2 \in \Pi(M) }} 
    \left[h(P_{i}^{\pi_1} \cup i, P_{j}^{\pi_2} \cup j) -\right.  \\
    & \left.h(P_{i}^{\pi_1}, P_{j}^{\pi_2}\cup j )\right] - \left[h(P_{i}^{\pi_1}\cup i, P_{j}^{\pi_2} )- h(P_{i}^{\pi_1}, P_{j}^{\pi_2}) \right],\nonumber
\end{align}
}

where $\Pi(X)$ is a set of all permutations of $X$, $\pi \in \Pi(X)$ is a permutation of $X$, and $P^{\pi}_i$ is a set of elements preceding $i$ in $\pi$.
The expression in the first bracket is the 1D marginal contribution of sample $i$ and is valid since both utilities are trained on same features, $P_j^{\pi_2} \cup j$. 
Similarly, the second bracket also represents a valid 1D marginal contribution of the sample $i$ but with features $P_j^{\pi_2}$. From this observation, we can apply the results of $\oned$ value approximated with nearest neighbors, $\phi^\text{KNN}$, defined recursively in Theorem 1~\cite{jia2019efficient}, and the $\alg$ under KNN surrogates can be therefore expressed as

\vspace{-1em}
{\small
\begin{equation} \label{eq:2dknn-sv}
\psi_{ij}^\text{2d-KNN} =\frac{1}{m!}\sum_{\substack{ \pi_2 \in \Pi(M) }}  [\phi^\text{KNN}(i, P_{j}^{\pi_2} \cup j) - \phi^\text{KNN}(i, P_{j}^{\pi_2})].\nonumber
\end{equation}
}
\vspace{-1em}

This new formulation is efficient as it requires no more model training and removes the summing over all possible permutations of samples. We can further approximate the sum over all possible permutations over features with the average over sampled permutations.
Our final complexity becomes $\mathcal{O}(PT|M||N|^2log|N|)$, where $P$ is the number of sampled feature permutations, $T$ is the number of test points used for evaluating model performance, and $|N|,|M|$ are the cardinality of $N$ and $M$ respectively, and the pseudo-code for the overall KNN-based approximation is provided in Algorithm~\ref{alg:2d-knn}.







\section{Experiments}~\label{sec:experiments}
\vspace{-0.5em}

\begin{figure*}[t]
\begin{center}
  \includegraphics[width=1\textwidth]{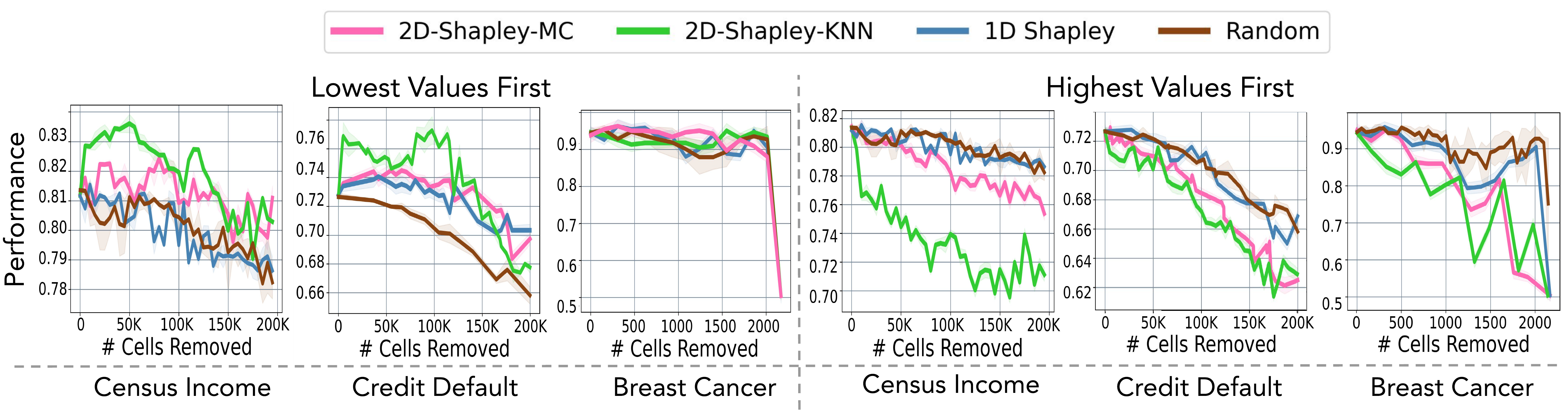}
  \vspace{-1em}
  \caption{Performance comparison between $\alg$ and baselines on various use cases.}~\label{fig:2d_vals}
  \end{center}
  \vspace{-1em}
\end{figure*}
\vspace{-1em}

This section covers the two general application scenarios of $\alg$. (1) \emph{Cell valuation}, where each cell in the training data matrix is considered a data source and receives a score indicating its contribution to a learning task performed on the matrix. We mainly demonstrate this application scenario's benefits in fine-grained data debugging and interpreting canonical sample-wise or feature-wise data values. (2) \emph{Sub-matrix valuation}, where a sub-matrix containing multiple cells is considered a data source and receives a joint score. This scenario is closely related to data marketplaces, where each entity provides a dataset that appears as a submatrix in the aggregated data. Details about datasets, models, implementations, and ablation studies on a budget of inserted outliers are provided in Appendix~\ref{sec:res}.





\subsection{Cell Valuation}\label{subsec: cell_valu} 








\paragraph{Sanity check of cell-wise values.} We first check whether the cell-wise values produced by our method make sense via the data removal experiments commonly used in the data valuation literature. 
Specifically, we would expect that removing the cells with the highest values from the training set leads to the most significant performance degradation; conversely, removing the cells with the lowest values should barely affect the model performance. 
To evaluate the model performance after removal, we ``remove'' a cell by refilling its content with the average of all other cells on the same feature column. 
In the previous section, we present two algorithms to calculate $\alg$. We will label the values obtained from the Monte Carlo-based method as $\algmc$ and the ones from the KNN-surrogate-based method as $\algknn$. $\oned$ and random removal are used as our baselines. 
In particular, $\oned$ is estimated by the permutation sampling described in~\cite{jia2019towards}. 
For each baseline, we remove a number of cells at a time based on their sample-feature value ranking in either descending or ascending order; then, we train a model on the reduced dataset and evaluate the model performance. 

As shown in Figure~\ref{fig:2d_vals}, when removing cells in ascending value order, $\alg$ can not only maintain the model performance but also improve it by at least $2\%$ for Census, Credit, and Breast Cancer datasets, whereas $\oned$ dips the model performance earlier than $\alg$ in all three datasets. 
Considering removal from the highest valued cells, we observe that $\alg$ can effectively detect contributing cells, and removing these cells causes the model performance to drop quickly. 
By contrast, removing cells according to $\oned$ is close to random removal. These results indicate that $\alg$ is more effective than $\oned$ at recognizing the contribution of cells and can better inform strategic data harnessing in ML.
\vspace{-1 em}
\paragraph{Fine-Grained Outlier Localization.} Existing horizontal data valuation methods have demonstrated promising results in detecting abnormal samples~\cite{ghorbani2019data,kwon2022beta,wang2023banzhaf} by finding lowest-valued samples. 
However, it is rarely the case that every cell in the sample is abnormal. For instance, a type of error in the Census data is ``198x$\rightarrow$189x'', where the years of birth are wrongly specified; this error could appear on a single feature column and, at the same, only affects partial samples (or users) born in 198x. 
Existing horizontal valuation remains limited in localizing these erroneous entries. 

To demonstrate the potential of $\alg$ in fine-grained entry-wise outlier detection, we first inject outlier cells into the clean dataset, Breast Cancer Dataset. 
Following a recent outlier generation technique in~\cite{du2022vos}, we inject low-probability-density values into the dataset as outlier cells. 
We explain the outlier injection method in detail in Appendix~\ref{sec:outlier}.
We randomly place outlier cells in $~2\%$ (50 of total cells). 
Afterward, we compute $\algknn$ for each cell in the dataset with inserted outliers, which are shown in Figure~\ref{fig:bc_2d_out_heatmap}.  
Since we expect outliers not to be helpful for the model performance, the values for outlier cells should be low. 
Therefore, we sort the $\alg$ cell values in ascending order and prioritize human inspection towards the ones with the lowest values. 
We show the detection rate of the inserted outliers in Figure~\ref{fig:census_breast_outlier_detection}A). 
As we can see, with $\alg$ values, we can detect $90\%$ of inserted outliers within the first $5\%$ of all cells. 
By contrast, based on the values produced by $\oned$, one would need over $90\%$ of cell inspection to screen out all the outlier cells.



\begin{figure*}
\centering
\begin{minipage}{.38\textwidth}
  \centering
  \includegraphics[width=1\linewidth]{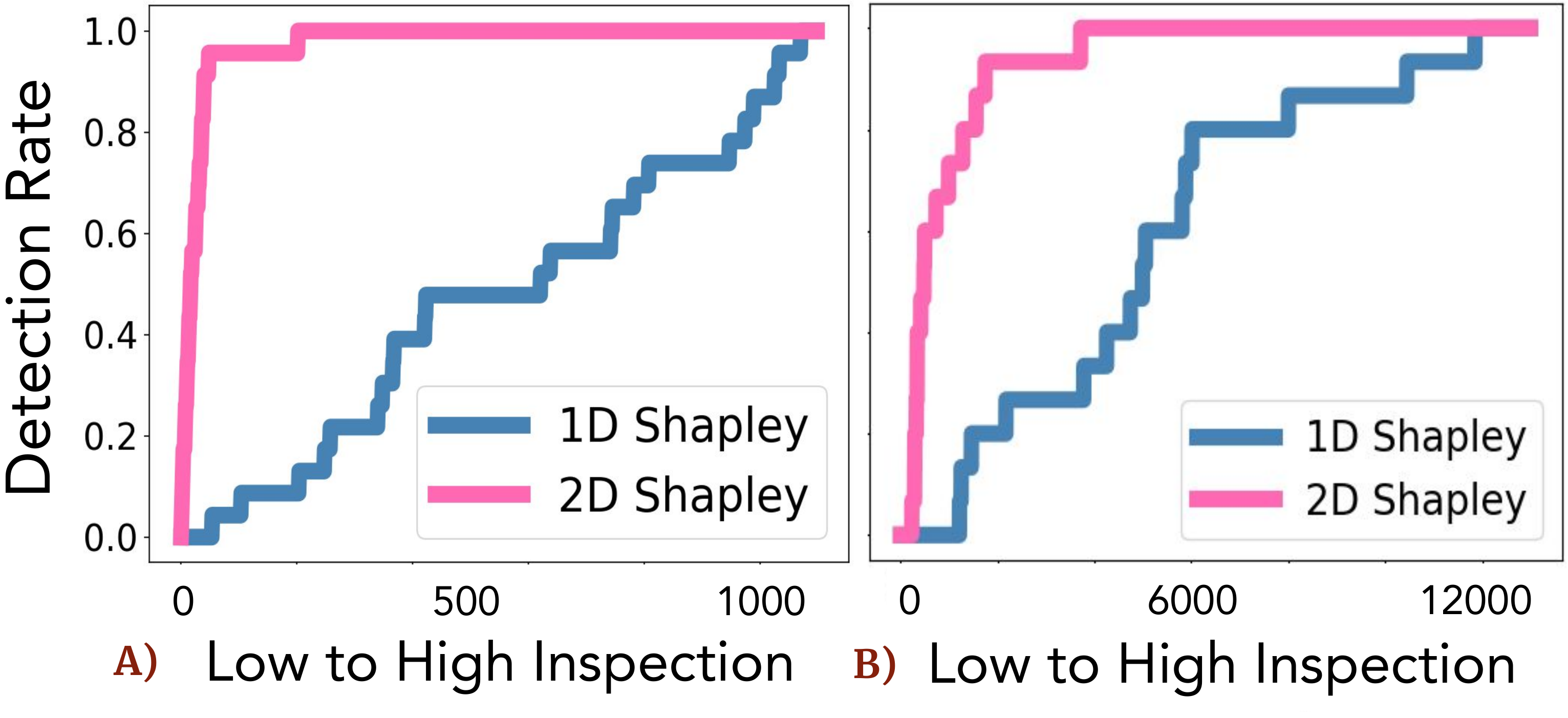}
  \captionof{figure}{A) Detection of the inserted outliers in the Breast Cancer dataset. B) Detection of the inserted outliers in the Age category of the Census
  dataset. }
  \label{fig:census_breast_outlier_detection}
\end{minipage}
\hfill
\begin{minipage}{.25\textwidth}
  \centering
  \includegraphics[width=1\linewidth]{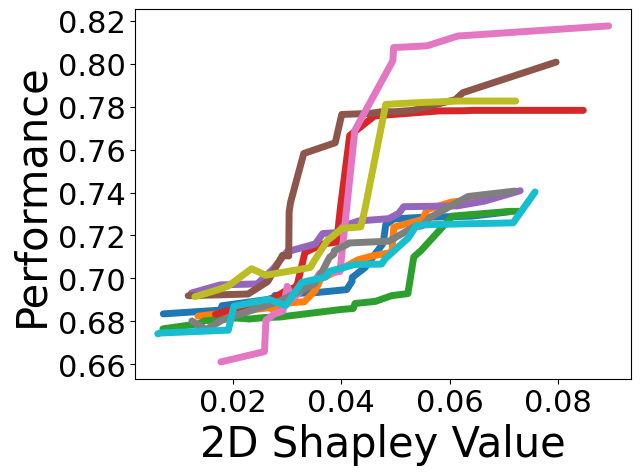}
  \captionof{figure}{2D Shapley vs Model Performance on various dataset splits.}\label{fig:2d_vs_perf}
\end{minipage}
\hfill
\begin{minipage}{.32\textwidth}
  \centering
  \includegraphics[width=1\linewidth, height=100pt]{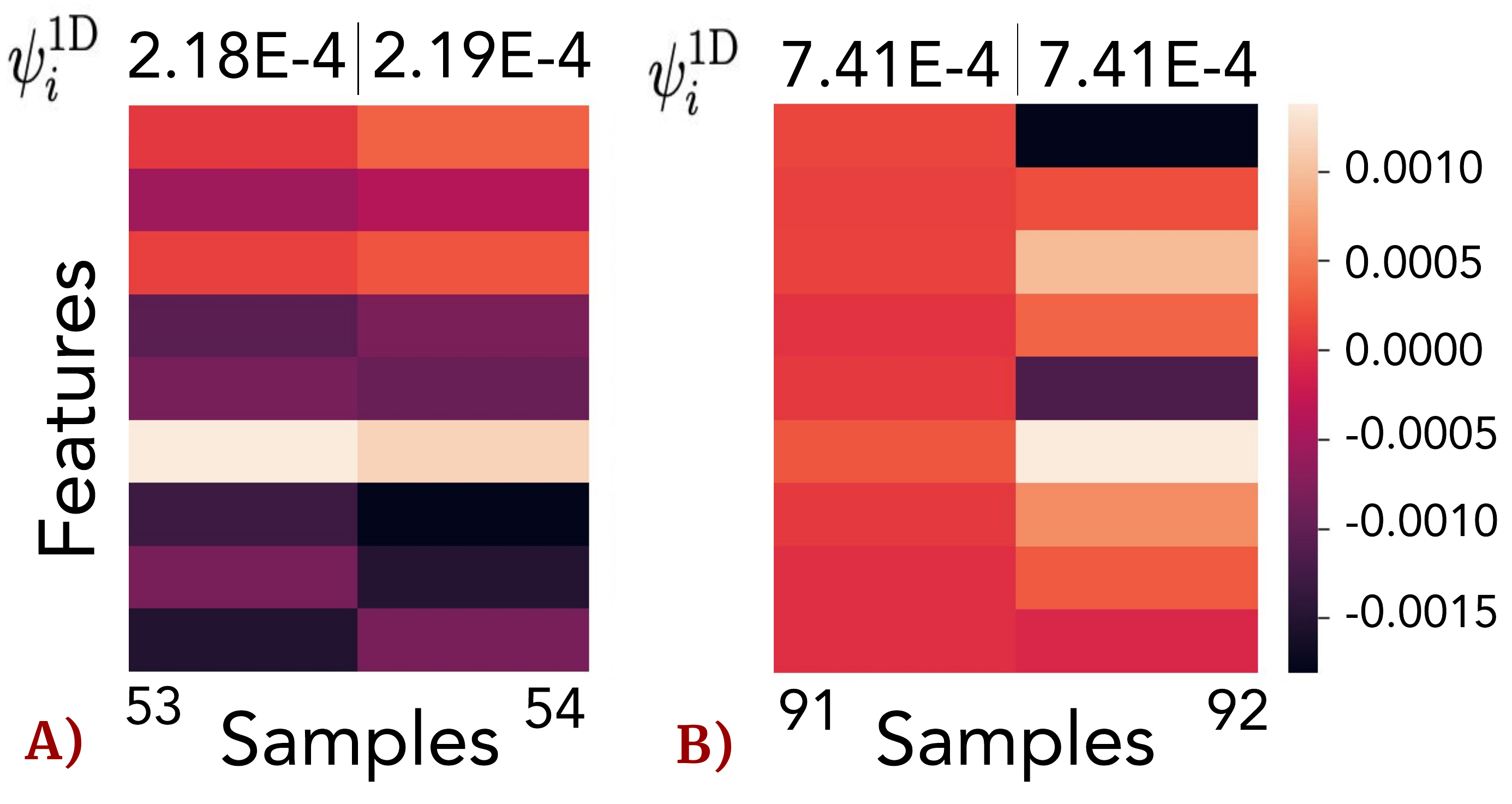}
  \captionof{figure}{Cell values of samples with similar 1D values in Breast Cancer dataset.}\label{fig:values_understand}
\end{minipage}
\vspace{0em}
\end{figure*}








We further examine a practical case of outliers caused by human errors, where 
the cells have been incorrectly typed, e.g., ``18'' became ``81''. 
In the Census dataset, for the feature ``Age'', we randomly swap 15 cells between ``17'' and ``71'', ``18'' and ``81'', ``19'' and ``91''. 
Similarly, we sort the values of all cells in the dataset in ascending order. 
As we observe in Figure~\ref{fig:census_breast_outlier_detection}B), detection with $\alg$ outperforms $\oned$. Particularly, with $\alg$ we can detect $80\%$ of added outliers with less than $1800$ inspected cells while $\oned$ requires $4$ times as many cells to achieve a comparable rate. 
The $\oned$ and $\alg$ heatmaps are provided in Appendix. 
The results above demonstrate the effectiveness of $\alg$ in locating outlier cells in a dataset.
\vspace{-1em}
\paragraph{Enabling Interpretation of 1D Valuation Results.} Apart from outlier detection, $\alg$ also brings new insights into horizontal sample valuation or vertical feature valuation, which is referred to as 1D valuation. For instance, 1D sample valuation produces an importance score for each sample, but we lack a deeper understanding of why a sample receives a certain value. Recall Corollary~\ref{cor: 2d to 1d} that the sum of $\alg$ over rows or columns gives 1D feature values and 1D sample values, respectively. Hence, $\alg$ allows one to interpret the 1D value of a sample by further breaking it down to contributions of different features in that sample. That is, $\alg$ gives insights into the relative importance of different features of a sample to the valuation result received by the sample. 
For example, in Figure~\ref{fig:values_understand}A), we observe that two samples have similar 1D values and their cell values are also close. However, in Figure~\ref{fig:values_understand}B), we observe a contrasting case, where although both samples have a close 1D value, their cell values are completely unrelated. \edit{More detailed results can be found in Appendix~\ref{sec:outlier}}.

\vspace{-0.5em}
\subsection{Sub-matrix Valuation} 







We turn to the application of $\alg$ to inform dataset pricing in the data marketplace.
$\alg$ enables a principled method to value fragmented data sources as illustrated in Figure~\ref{fig:incomplete}(c), where each source is a sub-matrix in the aggregated training data matrix. A reasonable measure of a source's value should reflect its usefulness for ML.
Hence, to verify the significance of the resulting values for sub-matrix valuation, we measure the model performance trained on a source and examine the correlation between its value and the performance. For this experiment, we use the Credit Dataset with sources contributing fragmented data and consider multiple random splits of the dataset.
The results are provided in Figure~\ref{fig:2d_vs_perf}, where each line corresponds to a different split of the aggregate data into individual sources. Figure~\ref{fig:2d_vs_perf}
shows that with the increasing model performance trained on the block, its corresponding $\alg$ block value also increases.

\vspace{-0.5em}
\section{Conclusion}~\label{sec:conclusion}
\vspace{-1.5em}

This work aims to set the theoretical foundation for more realistic data valuation application scenarios. 
In particular, we investigate the block valuation problem and present $\alg$, a new data value notion that is suitable to solve this problem. $\alg$ empowers a range of new use cases, such as informing the pricing of fragmented data, strategic data selection on a fine-grained scale, and interpreting 1D valuation results.
Our work opens up many new venues for future investigation. First, we can immediately adapt our proof technique to prove a two-dimensional generalization of other typical data value notions~\cite{kwon2022beta,wang2023banzhaf}. Second, it is interesting to build upon our framework to evaluate irregular-shaped data sources~\cite{fang2019model} and incorporate label information for joint valuation in a principled way.






\section*{Acknowledgements}~\label{sec:Acknowledge}

Xiangyu Chang's work was partly supported by the National Natural Science Foundation for Outstanding Young Scholars of China under Grant 72122018 and partly by the Natural Science Foundation of Shaanxi Province under Grant 2021JC-01. Xi Chen would like to thank the support from NSF via the Grant IIS-1845444. RJ and the ReDS lab would like to thank the support from NSF via the Grant OAC-2239622.


\bibliography{2023_icml_main}
\bibliographystyle{icml2023}

\onecolumn
\icmltitle{ \alg: A Framework for Fragmented Data Valuation \\ Supplementary Materials 
}



\appendix{}


\begin{appendices}{}



\section{Proof of the fact that Axiom \ref{axiom: symmetry} is implied by its explanation}\label{appendix: proof for equivalence}
The explanation above is: for $i_1, i_2 \in N$, $j_1,j_2\in M$, if for any $S\subseteq N\backslash \{i_1, i_2\}$ and $F\subseteq M$, 
$h(S\cup i_1,F)=h(S\cup i_2,F)$, and for any $S\subseteq N$ and $F\subseteq M\backslash\{j_1, j_2\}$, $h(S,F\cup j_1)=h(S,F\cup j_2)$,
then $\psi_{i_1j_1}(h)=\psi_{i_2j_2}(h)$.

For the proof,  we prove in three steps that the explanation is equivalent to Axiom \ref{axiom: symmetry}. Note that we should assume Axiom \ref{axiom: linear}, \ref{axiom: dummy} and \ref{axiom: efficiency} already exist. For simplicity, we use the lowercase letter to denote the cardinality of a set, for example, $|S|=s$.

We want to prove the following proposition.
\begin{proposition}\label{prop: equivalence}
If Axiom \ref{axiom: linear}, \ref{axiom: dummy} and \ref{axiom: efficiency} exist, then Axiom \ref{axiom: symmetry} is equivalent to its explanation.
\end{proposition}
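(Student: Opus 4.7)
The plan is to prove the two implications separately, with the ($\Rightarrow$) direction (Axiom~\ref{axiom: symmetry} implies the explanation) being the routine one and the ($\Leftarrow$) direction (explanation combined with Axioms~\ref{axiom: linear}, \ref{axiom: dummy}, and \ref{axiom: efficiency} implies Axiom~\ref{axiom: symmetry}) carrying the substance of the argument.

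For the ($\Rightarrow$) direction, given the explanation's hypothesis for some $i_1,i_2\in N$ and $j_1,j_2\in M$, I would take the transpositions $\pi_1=(i_1\,i_2)$ on $N$ and $\pi_2=(j_1\,j_2)$ on $M$ and verify by case analysis that $(\pi_1\pi_2)h=h$ as a utility function: when $S$ contains both or neither of $i_1,i_2$ (and similarly for $F$ with $j_1,j_2$) the permutation acts trivially on the relevant argument, while when exactly one is present the pairwise interchangeability hypothesis lets us swap the lone element back and recover the original value. Applying Axiom~\ref{axiom: symmetry} to this now-invariant $h$ at $(i,j)=(i_1,j_1)$ then yields $\psi_{i_2 j_2}(h)=\psi_{i_1 j_1}(h)$ directly; note that this direction uses only Axiom~\ref{axiom: symmetry}, not the other three.

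The ($\Leftarrow$) direction is more delicate. I would first observe that Axiom~\ref{axiom: symmetry} composes under permutation multiplication: if it holds for $(\pi_1,\pi_2)$ and for $(\sigma_1,\sigma_2)$, then it holds for the product $(\sigma_1\pi_1,\sigma_2\pi_2)$, since $(\pi_1\pi_2)(\sigma_1\sigma_2)h(S,F)=h(\sigma_1\pi_1(S),\sigma_2\pi_2(F))$. As every permutation factors into transpositions, it suffices to establish Axiom~\ref{axiom: symmetry} for a single transposition on either side. To do so, I would invoke a basis-expansion argument. Introduce the unanimity utilities $w_{T,G}(S,F)=\mathbf{1}[T\subseteq S,\,G\subseteq F]$ for non-empty $T\subseteq N$, $G\subseteq M$; Möbius inversion shows these span the utility subspace normalized by $h(S,\emptyset)=h(\emptyset,F)=0$. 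On each $w_{T,G}$: Axiom~\ref{axiom: dummy} forces $\psi_{ij}(w_{T,G})=0$ whenever $i\notin T$ or $j\notin G$, since the 2D marginal $M_{w_{T,G}}^{i,j}$ vanishes identically in those cases; the explanation, applied to pairs $(i_1,i_2)\in T\times T$ and $(j_1,j_2)\in G\times G$ (whose interchangeability hypothesis is easy to check on $w_{T,G}$), forces the remaining values on $T\times G$ to coincide; and Axiom~\ref{axiom: efficiency} pins the common value to $1/(|T|\,|G|)$.

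With the closed form $\psi_{ij}(w_{T,G})=\mathbf{1}[(i,j)\in T\times G]/(|T|\,|G|)$ in hand, I would verify Axiom~\ref{axiom: symmetry} on each basis element by direct computation: $(\pi_1\pi_2)w_{T,G}$ is itself a unanimity game on the permuted index sets, and plugging the closed form into both sides of the axiom reduces to matching indicator conditions. Axiom~\ref{axiom: linear} then extends the identity from the basis to arbitrary utility functions. The main obstacle I anticipate is the bookkeeping in this final step: correctly tracking the convention under which permutations act on utility functions so that $(\pi_1\pi_2)w_{T,G}$ is identified with the right permuted unanimity game, verifying the interchangeability hypothesis cleanly on $w_{T,G}$ for pairs inside $T\times G$, and clarifying the normalization of the utility space so that Axioms~\ref{axiom: dummy} and~\ref{axiom: efficiency} remain jointly consistent on the basis.
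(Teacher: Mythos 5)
Your proof is correct, and the substantive direction (explanation plus Axioms~\ref{axiom: linear}, \ref{axiom: dummy}, \ref{axiom: efficiency} implies Axiom~\ref{axiom: symmetry}) follows essentially the same route as the paper: unanimity utilities, the dummy axiom killing the value off $T\times G$, the explanation equalizing the values on $T\times G$, efficiency fixing the common value to $1/(|T||G|)$, M\"obius inversion for the basis expansion, and linearity to extend; your extra reduction to transpositions is harmless but unnecessary, since (as the paper does) one can verify the symmetry axiom for an arbitrary pair $(\pi_1,\pi_2)$ directly by reindexing the closed-form sum $\psi_{ij}(h)=\sum_{i\in S,\,j\in F}C_{S,F}(h)/(sf)$. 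Where you genuinely diverge is the converse direction. The paper derives it by invoking the full representation formula Eq.~(\ref{Eq:2dShap_set_express}), which is only available after the uniqueness argument of Theorem~\ref{thm: Representation of 2D Shapley-value}; your argument instead checks by the chained case analysis that the hypothesis of the explanation forces $(\pi_1\pi_2)h=h$ for the two transpositions $(i_1\,i_2)$ and $(j_1\,j_2)$, and then applies Axiom~\ref{axiom: symmetry} once. This is more elementary and self-contained --- it uses only the symmetry axiom, needs no forward reference to the representation theorem, and the chaining step (swap $i_1\to i_2$ at fixed $F\cup j_1$, then $j_1\to j_2$ at fixed $S\cup i_2$) is exactly what is needed when both coordinates contain exactly one element of their pair. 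One caveat you correctly anticipate: the nonempty unanimity games $w_{T,G}$ span only the subspace normalized by $h(S,\emptyset)=h(\emptyset,F)=0$, whereas the paper's expansion ranges over all $S\subseteq N$, $F\subseteq M$ including empty sets; the degenerate basis elements must be handled separately (the paper assigns them value $0$ via the dummy axiom, since every block has identically vanishing 2D marginal for them), and you should make that step explicit so that linearity applies to an arbitrary $h\in G$.
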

\begin{proof}
For the direction that Axiom \ref{axiom: symmetry} is implied by its explanation, we prove in three steps.
\begin{itemize}
    \item Step 1: 
    Define a utility function $h_{S,F}$:
    \begin{equation}\label{eq:one_basis}
	    h_{S,F}(W_1, W_2)=\left\{
	\begin{array}{lr}
		1, if \ S\subseteq W_1 , F\subseteq W_2.\\
		0, otherwise.
	\end{array}
	\right.
	\end{equation}
	For fixed $S\subseteq N$, $F\subseteq M$ and $i_1, i_2\in S$, $j_1,j_2\in F$ and for all $W_1\subseteq N\backslash\{i_1,i_2\},W_2\subseteq M\backslash\{j_1,j_2\}$, $M_{h_{S,F}}^{i_1,j_1}(W_1,W_2)=M_{h_{S,F}}^{i_2,j_2}(W_1,W_2)$. It leads to the conclusion that $\psi_{i_1j_1}(h_{S,F})=\psi_{i_2j_2}(h_{S,F})$ according to the explanation.
	
	For $i^*\notin S$, $j\in M$ (or $j^*\notin F$, $i\in N$) and $W_1\subseteq N\backslash i^*$, $W_2\subseteq M\backslash j$, ($W_1\subseteq N\backslash i$, $W_2\subseteq M\backslash j^*$,) $M_{h_{S,F}}^{i^*,j}(W_1,W_2)=0$. ($M_{h_{S,F}}^{i,j^*}(W_1,W_2)=0$.) 
	It leads to the conclusion that $\psi_{i^*j}(h_{S,F})=0$, $\forall j\in M$ ($\psi_{ij^*}(h_{S,F})=0$, $\forall i\in N$) according to Axiom \ref{axiom: dummy}.
	
	 In summary, we have conclusion that the values $\psi_{ij}$s are the same when $i\in S, j\in F$, and otherwise zero.
	According to Axiom \ref{axiom: efficiency}, 
	\begin{equation*}
	    1=h_{S,F}(N,M)=\sum_{\substack{i\in N\\j\in M}}\psi_{ij}(h_{S,F})=\sum_{\substack{i\in S\\j\in F}}\psi_{ij}(h_{S,F}).
	\end{equation*}
	then $\psi_{ij}(h_{S,F})=1/sf$, where $i\in S, j\in F$.
	
	\item Step 2: 
	We have to prove a lemma which shows another formation of a utility $h$ by using $h_{S,F}$ defined above.
	\begin{lemma}
	\begin{equation*}
	    h=\sum_{\substack{S\subseteq N\\F\subseteq M}}C_{S,F}(h)h_{S,F},
	\end{equation*}
	where $C_{S,F}(h)=\sum_{\substack{S'\subseteq S\\F'\subseteq F}}(-1)^{s+f-s'-f'}h(S',F')$.
	\end{lemma}
	\begin{proof}
	We can directly verify the lemma.
	\begin{align*}
	    h(W_1,W_2)&=\sum_{\substack{S\subseteq N\\F\subseteq M}}C_{S,F}(h)h_{S,F}(W_1,W_2)\\
	    &=\sum_{\substack{S\subseteq W_1\\F\subseteq W_2}}\sum_{\substack{S'\subseteq S\\F'\subseteq F}}(-1)^{s+f-s'-f'}h(S',F')\\
	    &=\sum_{\substack{S'\subseteq W_1\\F'\subseteq W_2}}\big[\sum_{s=s'}^{w_1}(-1)^{s-s'}\tbinom{w_1-s'}{s-s'}\sum_{f=f'}^{w_2}(-1)^{f-f'}\tbinom{w_2-f'}{f-f'}\big]h(S',F')\\
	    &=h(W_1,W_2).
	\end{align*}
	\end{proof}
	
	\item Step 3: 
	Combine the first two steps, and by Axiom \ref{axiom: linear},
	\begin{align*}
	    \psi_{ij}(h)&=\sum_{\substack{S\subseteq N\\F\subseteq M}}C_{S,F}(h)\psi_{ij}(h_{S,F})\\
	    &=\sum_{\substack{i\in S\subseteq N\\j\in F\subseteq M}}C_{S,F}(h)/sf.
	\end{align*}
	Let $\pi_1,\pi_2$ be two permutations on $N$ and $M$ respectively, then
	\begin{align*}
	    \psi_{\pi_1(i)\pi_2(j)}(\pi_1\pi_2h)&=\sum_{\substack{\pi_1(i)\in S\subseteq N\\\pi_2(j)\in F\subseteq M}}C_{S,F}(\pi_1\pi_2h)/sf\\
	    &=\sum_{\substack{i\in \pi_1(S)\subseteq N\\j\in \pi_2(F)\subseteq M}}C_{\pi_1(S),\pi_2(F)}(h)/sf\\
	    &=\psi_{ij}(h).
	\end{align*}
	
\end{itemize}

For another direction that Axiom \ref{axiom: symmetry} implies its explanation, since we already assume Axiom \ref{axiom: linear}, \ref{axiom: dummy}, \ref{axiom: symmetry} and \ref{axiom: efficiency} hold, then we have the formula of $\alg$, that is, Eq.~(\ref{Eq:2dShap_set_express}). Clearly, we can see the numerator is always the same for both $i_1j_1$ and $i_2j_2$ under the same $S$ and $F$, hence $\psi_{i_1j_1}(h)=\psi_{i_2j_2}(h)$.

\end{proof}

\section{Proof of the representation theory of $\alg$}\label{sec:appendix_representation_2dshapley}
In this section, we will justify the representation theory by a number of proposed lemmas.
The proof process is to add the axioms one by one and try to show what each axiom does for $\alg$.
We add linearity and dummy axioms first to get a sum of weighted marginals.
\begin{lemma}\label{lemma:linearity and dummy}
For any value $\psi_{ij}$ satisfying the 2d-linearity and 2d-dummy axioms (Axiom \ref{axiom: linear} and \ref{axiom: dummy}), we have that
\begin{align}
   &\psi_{ij}(h)=\sum_{S\subseteq N\backslash i}\sum_{F\subseteq M\backslash j}p^{ij}_{S,F}[h(S\cup i,F\cup j)+h(S,F)
    \nonumber 
    -h(S\cup i,F)-h(S,F\cup j)],
    where\ \sum_{S\subseteq N\backslash i}\sum_{F\subseteq M\backslash j}p^{ij}_{S,F}=1.
\end{align}
where $\sum_{S\subseteq N\backslash i}\sum_{F\subseteq M\backslash j}p^{ij}_{S,F}=1$.
\end{lemma}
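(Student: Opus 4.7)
The plan is to exploit the fact that Axiom~\ref{axiom: linear} makes $\psi_{ij}$ a linear functional on the finite-dimensional space $G$ of utility functions, and then use Axiom~\ref{axiom: dummy} as a kernel-vanishing condition that pins $\psi_{ij}$ down to a combination of the 2D marginals in Eq.~(\ref{eq:margin}).

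First, for each pair $(S, F)$ with $S \subseteq N\setminus i$ and $F \subseteq M\setminus j$, I would introduce the linear functional $L^{ij}_{S,F}\colon G \to \mathbb{R}$ defined by $L^{ij}_{S,F}(h) := M^{i,j}_h(S,F)$, together with the subspace
\begin{equation*}
G_0 \;:=\; \bigcap_{S \subseteq N\setminus i,\; F \subseteq M\setminus j} \ker L^{ij}_{S,F}
\end{equation*}
on which all 2D marginals of $(i,j)$ vanish. Any $h \in G_0$ makes $(i,j)$ a 2D-dummy in the sense of Eq.~(\ref{def:dummy}) with $c = 0$, so Axiom~\ref{axiom: dummy} forces $\psi_{ij}(h) = 0$ throughout $G_0$.

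Next, since $G$ is finite-dimensional and $\psi_{ij}$ is a linear functional annihilated by $\bigcap_{S,F} \ker L^{ij}_{S,F}$, a standard duality fact (any linear functional vanishing on the intersection of the kernels of finitely many linear functionals lies in their span) produces coefficients $p^{ij}_{S,F} \in \mathbb{R}$ such that
\begin{equation*}
\psi_{ij}(h) \;=\; \sum_{S \subseteq N\setminus i}\sum_{F \subseteq M\setminus j} p^{ij}_{S,F}\, M^{i,j}_h(S, F) \qquad \text{for all } h \in G,
\end{equation*}
which is exactly the weighted-marginal form claimed. To obtain the normalization $\sum p^{ij}_{S,F} = 1$, I would test this identity on the explicit utility $h^\star(W_1, W_2) := \mathbf{1}[i \in W_1]\,\mathbf{1}[j \in W_2]$. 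A direct four-term expansion of Eq.~(\ref{eq:margin}) gives $M^{i,j}_{h^\star}(S,F) = 1$ for every admissible $(S,F)$, so Axiom~\ref{axiom: dummy} (with $c=1$) forces $\psi_{ij}(h^\star) = 1$, while the representation above evaluates to $\sum_{S,F} p^{ij}_{S,F}$, yielding the required identity.

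The main subtlety is the duality step: one must verify that Axiom~\ref{axiom: dummy} truly bites on all of $G_0$ rather than only on some narrower one-dimensional notion of dummy. This is immediate here because Eq.~(\ref{def:dummy}) is phrased directly through the 2D marginal $M^{i,j}_h$, so every $h \in G_0$ is a 2D-dummy with $c = 0$ verbatim. A more constructive alternative to the duality argument would be to expand $h$ in the unanimity basis $\{h_{S_0,F_0}\}$ introduced in Appendix~\ref{appendix: proof for equivalence}, invoke Axiom~\ref{axiom: linear}, and evaluate $\psi_{ij}(h_{S_0,F_0})$ basis element by basis element, but this route involves heavier combinatorics without strengthening the conclusion of the lemma.
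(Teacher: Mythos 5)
Your proof is correct, but it takes a genuinely different route from the paper's. The paper expands $h$ in the ``delta'' basis $W_{S,F}$ (the indicator of $W_1=S$, $W_2=F$ exactly), applies the 2D-dummy axiom to auxiliary games $W'_{S,F}$ and runs a downward induction on $|S|$ to obtain the telescoping relations $\psi_{ij}(W_{S\cup i,F'})+\psi_{ij}(W_{S,F'})=0$ (and the analogous relations in the feature direction), and then regroups the linear expansion into the weighted-marginal form, along the way \emph{explicitly identifying} $p^{ij}_{S,F}=\psi_{ij}(W_{S\cup i,F\cup j})$. You instead observe that $\psi_{ij}$ is a linear functional on the finite-dimensional space $G$ that vanishes on $\bigcap_{S,F}\ker L^{ij}_{S,F}$ (which is exactly the set of games for which $(i,j)$ is a $2$D-dummy with $c=0$), and invoke the standard duality fact to conclude $\psi_{ij}\in\operatorname{span}\{L^{ij}_{S,F}\}$; both arguments then obtain the normalization identically, by testing on $h^\star=\mathbf{1}[i\in W_1]\mathbf{1}[j\in W_2]$, which is the paper's $h_{ij}$. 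Your route is shorter and avoids the combinatorial induction; what it gives up is constructiveness: it produces the coefficients only abstractly, whereas the paper's identification $p^{ij}_{S,F}=\psi_{ij}(W_{S\cup i,F\cup j})$ is what the subsequent symmetry lemma leans on when it computes $p^{ij}_{S,F}$ as the value of $\psi_{ij}$ at specific games $\hat h_{S,F}$. If you wanted your version to feed into that next step, you would need to add the (easy) observation that the functionals $L^{ij}_{S,F}$ are linearly independent --- the evaluation at $(S\cup i,F\cup j)$ appears in $L^{ij}_{S,F}$ and in no other $L^{ij}_{S',F'}$ --- so the coefficients are in fact unique and must coincide with the paper's. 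For the lemma as stated, which only asserts existence of weights summing to one, your argument is complete.
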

\begin{proof}
For any $h\in G$, 
\begin{align}\label{eq:basis_expansion}
    h&=\sum_{\substack{S\subseteq N\\F\subseteq M}}h(S, F)W_{S, F},
\end{align}
where 
\begin{equation*}
	    W_{S, F}(W_1, W_2)=\left\{
	\begin{array}{lr}
		1, if \ W_1=S, W_2=F.\\
		0, otherwise.
	\end{array}
	\right.
	\end{equation*}
By the 2d-linearity axiom,
\begin{align*}
     \psi_{ij}(h)&=\sum_{\substack{S\subseteq N\\F\subseteq M}}h(S, F)\psi_{ij}(W_{S, F}).
\end{align*}
Now define another utility function $W'_{S, F}$:
	\begin{equation*}
	    W'_{S, F}(W_1, W_2)=\left\{
	\begin{array}{lr}
		1, if \ S\subseteq W_1, F=W_2.\\
		0, otherwise.
	\end{array}
	\right.
	\end{equation*}
For any $S\subseteq N\backslash i$ and $F\subseteq M\backslash j$, we can check that block $(i,j)$ is a dummy for $W'_{S, F}$, then by the 2d-dummy axiom, $\psi_{ij}(W'_{S, F})=0$. Especially, let $S=N\backslash i$ and any fixed $F'\subseteq M\backslash j$, we have:
\begin{equation*}
    \psi_{ij}(W_{N,F'})+\psi_{ij}(W_{N\backslash i,F'})=0.
\end{equation*}
For inductive purposes, assume it has been shown that $\psi_{ij}(S, F')+\psi_{ij}(S\cup i, F')=0$ for fixed $F'\subseteq M\backslash j$ and every $S\subseteq N\backslash i$ with $|S|\geq k\geq 2$. (The case $k=n-1$ has been proved.) Now take fixed $S\subseteq N\backslash i$ with $|S|=k-1$, then
\begin{align*}
    0=\psi_{ij}(W'_{S,F'})&=\sum_{S\subseteq S_1\subseteq N}\psi_{ij}(W_{S_1,F'})\\
    &=\psi_{ij}(W_{S\cup i,F'})+\psi_{ij}(W_{S ,F'})+\sum_{\substack{S_1\subseteq N\backslash i\\S\subsetneq S_1}}[\psi_{ij}(W_{S_1\cup i,F'})+\psi_{ij}(W_{S_1,F'})]\\
    &=\psi_{ij}(W_{S\cup i,F'})+\psi_{ij}(W_{S ,F'}).
\end{align*}
Therefore, $\psi_{ij}(W_{S\cup i,F'})+\psi_{ij}(W_{S ,F'})=0$ for all $S\subseteq N\backslash i$ and fixed $F'\subseteq N\backslash j$ with $0<|S|\leq n-1$ and $0<|F'|\leq m-1$.
Similarly, we  have another conclusion that $\psi_{ij}(W_{S',F})+\psi_{ij}(W_{S' ,F\cup j})=0$ for fixed $S'\subseteq N\backslash i$ and all $F\subseteq N\backslash j$ with $0<|S'|\leq n-1$ and $0<|F|\leq m-1$ by simply defining another similar utility function $W'_{S', F}$ and repeat the process above again.

Using the results above,
\begin{align*}
    \psi_{ij}(h)&=\sum_{\substack{S\subseteq N\\F\subseteq M}}h(S, F)\psi_{ij}(W_{S, F})\\
    &=\sum_{F\subseteq M}\sum_{S\subseteq N\backslash i}h(S\cup i,F)\psi_{ij}(W_{S\cup i, F})+h(S,F)\psi_{ij}(W_{S, F})\\
    &=\sum_{S\subseteq N\backslash i}\sum_{F\subseteq M}h(S\cup i,F)\psi_{ij}(W_{S\cup i, F})-h(S,F)\psi_{ij}(W_{S\cup i, F})\\
    &=\sum_{S\subseteq N\backslash i}\sum_{F\subseteq M\backslash j}\psi_{ij}(W_{S\cup i, F\cup j})[h(S\cup i,F\cup j)-h(S,F\cup j)]\\
    &-\sum_{S\subseteq N\backslash i}\sum_{F\subseteq M\backslash j}\psi_{ij}(W_{S\cup i, F\cup j})[h(S\cup i, F)-h(S, F)]\\
    &=\sum_{S\subseteq N\backslash i}\sum_{F\subseteq M\backslash j}\psi_{ij}(W_{S\cup i, F\cup j})[h(S\cup i,F\cup j)+h(S,F)\\
    &-h(S,F\cup j)-h(S\cup i,F)].
\end{align*}
For simplicity, denote $\psi_{ij}(W_{S\cup i, F\cup j})$ as $p^{ij}_{S,F}$, then
\begin{equation*}
    \psi_{ij}(h)=\sum_{S\subseteq N\backslash i}\sum_{F\subseteq M\backslash j}p^{ij}_{S,F}[h(S\cup i,F\cup j)+h(S,F)
    -h(S\cup i,F)-h(S,F\cup j)].
\end{equation*}
Consider the utility function $h_{ij}$, 
	\begin{equation*}
	    h_{ij}(W_1, W_2)=\left\{
	\begin{array}{lr}
		1, if \ i\in W_1, j\in W_2.\\
		0, otherwise.
	\end{array}
	\right.
	\end{equation*}
	and we can check that $ij$ is a dummy for $h_{ij}$, and $\psi_{ij}(h_{ij})=1$. Hence
\begin{equation*}
    1=\psi_{ij}(h_{ij})=\sum_{S\subseteq N\backslash i}\sum_{F\subseteq M\backslash j}p^{ij}_{S,F}.
\end{equation*}
\end{proof}
Next, add the 2d-symmetry axiom to Lemma \ref{lemma:linearity and dummy} and we make the conclusion that $p^{ij}_{S,F}$ is only related to the cardinality of $S$ and $F$, which is not associated with the name of the blocks.
\begin{lemma}\label{lemma: symmetry}
Assume 
Lemma \ref{lemma:linearity and dummy} holds.
If $\psi_{ij}$ also satisfies the 2d-symmetry axiom, then
\begin{equation*}
    p^{ij}_{S,F}=p_{s,f},
\end{equation*}
where $p_{s,f}$ is some common value for $S\subseteq N\backslash i$, $F\subseteq M\backslash j$ and $0\leq|S|=s\leq n-1$, $0\leq|F|=f\leq m-1$.
\end{lemma}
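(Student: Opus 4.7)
The plan is to leverage the explicit formula $p^{ij}_{S,F}=\psi_{ij}(W_{S\cup i,\,F\cup j})$ obtained during the proof of Lemma~\ref{lemma:linearity and dummy} and show that, under the action of the 2D-symmetry axiom, any two such basis utilities with matching cardinalities are mapped to each other by a suitable pair of permutations. Since 2D-symmetry equates the values at corresponding indices, the coefficients must agree.

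First, I would record how the permutation action transforms the basic utility $W_{A,B}$ defined in the proof of Lemma~\ref{lemma:linearity and dummy}. By the definition $[(\pi_1\pi_2)h](W_1,W_2)=h(\pi_1(W_1),\pi_2(W_2))$, one checks directly that
\begin{equation*}
(\pi_1\pi_2)W_{A,B}=W_{\pi_1^{-1}(A),\,\pi_2^{-1}(B)},
\end{equation*}
because $W_{A,B}(\pi_1(W_1),\pi_2(W_2))=1$ iff $W_1=\pi_1^{-1}(A)$ and $W_2=\pi_2^{-1}(B)$. This is the routine bookkeeping step.

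Next, fix two configurations $(i_1,j_1,S_1,F_1)$ and $(i_2,j_2,S_2,F_2)$ with $i_\ell\notin S_\ell$, $j_\ell\notin F_\ell$ and $|S_1|=|S_2|=s$, $|F_1|=|F_2|=f$. I would build permutations $\pi_1$ of $N$ and $\pi_2$ of $M$ satisfying $\pi_1(i_2)=i_1$, $\pi_1(S_2)=S_1$ (hence $\pi_1(S_2\cup i_2)=S_1\cup i_1$), and similarly $\pi_2(j_2)=j_1$, $\pi_2(F_2)=F_1$; these exist because the cardinalities match and the distinguished element sits outside each subset on both sides. Applying the 2D-symmetry axiom to $h=W_{S_1\cup i_1,\,F_1\cup j_1}$ with indices $(i_2,j_2)$ and combining with the transformation rule above yields
\begin{equation*}
\psi_{i_1 j_1}\!\bigl(W_{S_1\cup i_1,\,F_1\cup j_1}\bigr)=\psi_{i_2 j_2}\!\bigl(W_{\pi_1^{-1}(S_1\cup i_1),\,\pi_2^{-1}(F_1\cup j_1)}\bigr)=\psi_{i_2 j_2}\!\bigl(W_{S_2\cup i_2,\,F_2\cup j_2}\bigr),
\end{equation*}
which is precisely $p^{i_1 j_1}_{S_1,F_1}=p^{i_2 j_2}_{S_2,F_2}$. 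Defining $p_{s,f}$ as this common value completes the argument.

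The only delicate point is the bookkeeping around inverses in the transformation rule for $(\pi_1\pi_2)W_{A,B}$; getting the direction right is what lets the constructed $\pi_1,\pi_2$ map one basis utility to the other, rather than accidentally picking the wrong side of the bijection. Everything else is a direct application of Axiom~\ref{axiom: symmetry} together with the identification $p^{ij}_{S,F}=\psi_{ij}(W_{S\cup i,\,F\cup j})$ already established.
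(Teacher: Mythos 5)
Your overall strategy---identify $p^{ij}_{S,F}$ with $\psi_{ij}(W_{S\cup i,F\cup j})$ and use the 2D-symmetry axiom on these basis utilities---is the same one the paper uses (the paper works with the ``strict superset'' indicators $\hat h_{S,F}$, whose marginal $M^{i,j}_{\hat h_{S,F}}$ is exactly the indicator of $(S',F')=(S,F)$, but that is a cosmetic difference). Your transformation rule $(\pi_1\pi_2)W_{A,B}=W_{\pi_1^{-1}(A),\pi_2^{-1}(B)}$ is also correct. The gap is in the single-shot permutation step. Unwinding the axiom $\psi_{\pi_1(i_2)\pi_2(j_2)}[(\pi_1\pi_2)h]=\psi_{i_2j_2}(h)$ with $h=W_{S_2\cup i_2,F_2\cup j_2}$, the identity you need forces $\pi_1$ to satisfy \emph{both} $\pi_1(i_2)=i_1$ and $\pi_1(S_1\cup i_1)=S_2\cup i_2$ as a set equality. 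By injectivity this requires $i_2\in S_1\cup i_1\Leftrightarrow i_1\in S_2\cup i_2$, which fails in general: take $N=\{1,2,3\}$, $i_1=1$, $S_1=\{2\}$, $i_2=2$, $S_2=\{3\}$; then $\pi_1(2)=1\notin\{2,3\}$ while $2\in\{1,2\}$, so no such $\pi_1$ exists. The permutation you actually construct ($\pi_1(i_2)=i_1$, $\pi_1(S_2)=S_1$) does exist, but feeding it into the axiom produces $\psi_{i_1j_1}(W_{\pi_1^{-1}(S_2\cup i_2),\,\cdot})=\psi_{i_2j_2}(W_{S_2\cup i_2,\,\cdot})$, and $\pi_1^{-1}(S_2\cup i_2)$ is not $S_1\cup i_1$ (you only control $\pi_1^{-1}(S_1\cup i_1)$); indeed in the example above $\pi_1^{-1}(S_2\cup i_2)$ need not even contain $i_1$, so the left-hand side is not a coefficient $p^{i_1j_1}_{\cdot,\cdot}$ at all. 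Your displayed chain of equalities therefore does not follow from a single application of Axiom~\ref{axiom: symmetry}.

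The repair is the two-stage chaining the paper uses: first, with $(i,j)$ fixed, apply permutations fixing $i$ and $j$ to show $p^{ij}_{S_1,F_1}=p^{ij}_{S_2,F_2}$ whenever the cardinalities match (here only one ``direction'' of $\pi_1$ is constrained, so the permutation always exists); second, with $(S,F)$ fixed and disjoint from $\{i_1,i_2\}$ and $\{j_1,j_2\}$, apply the transpositions $i_1\leftrightarrow i_2$, $j_1\leftrightarrow j_2$ (which fix $\hat h_{S,F}$, resp.\ $W_{S\cup\cdot,F\cup\cdot}$ up to relabeling) to get $p^{i_1j_1}_{S,F}=p^{i_2j_2}_{S,F}$; then chain through a common intermediate $(S,F)$. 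Note the chaining needs $S\subseteq N\setminus\{i_1,i_2\}$ with $|S|=s$, which is impossible when $s=n-1$ and $i_1\neq i_2$ (similarly $f=m-1$, and the cases $s=0$, $f=0$), so the extreme cardinalities require the separate treatment given in the paper's third step.
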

\begin{proof}
Define a utility $\hat{h}_{S,F}$:
	\begin{equation*}
	    \hat{h}_{S,F}(W_1, W_2)=\left\{
	\begin{array}{lr}
		1, if \ S\subsetneq W_1 , F\subsetneq W_2.\\
		0, otherwise.
	\end{array}
	\right.
	\end{equation*}
\begin{enumerate}
    \item For $i\in N$ and $j\in M$, let $S_1$, $F_1$ and $S_2$, $F_2$ be any two coalitions where $S_1, S_2\subseteq N\backslash i$ and $F_1, F_2\subseteq M\backslash j$ with $0<|S_1|=|S_2|< n-1$ and $0<|F_1|=|F_2|< m-1$ respectively. Consider two permutation $\pi_1$ and $\pi_2$ which satisfy $\pi_1(S_1)=S_2, \pi_1(i)=i$ and $\pi_2(F_1)=F_2, \pi_2(j)=j$. Then,
\begin{equation*}
    p_{S_1,F_1}^{ij}=\psi_{ij}(\hat{h}_{S_1,F_1})=\psi_{ij}(\hat{h}_{S_2,F_2})=p_{S_2,F_2}^{ij},
\end{equation*}
where the central equality is a consequence of the 2d-symmetry axiom.
\item For distinct $i_1, i_2\in N$ and $j_1, j_2\in M$, let $S\subseteq N\backslash \{i_1,i_2\}$ and $F\subseteq M\backslash \{j_1, j_2\}$, and the permutations $\pi_1, \pi_2$ respectively interchange $i_1, i_2$ and $j_1, j_2$ while leaving other elements fixed. Then,
\begin{equation*}
    \pi_1\pi_2\hat{h}_{S,F}=\hat{h}_{S,F},
\end{equation*}
\begin{equation*}
    p_{S,F}^{i_1j_1}=\psi_{i_1j_1}(\hat{h}_{S,F})=\psi_{i_2j_2}(\hat{h}_{S,F})=p_{S,F}^{i_2j_2},
\end{equation*}
where the central equality is a consequence of the 2d-symmetry axiom. Combining with the previous result in Step 1, we find that for every $0<s<n-1$ and $0<f<m-1$, there is a $p_{s,f}$ such that $p_{S,F}^{ij}=p_{s,f}$ for every $i\in N$ and $j\in M$, $S\subseteq N\backslash i$ and $F\subseteq M\backslash j$ with $|S|=s$, $|F|=f$.
\item Similarly, by using different utility functions, we can find for $\forall i\in N, j\in M$:
\begin{itemize}
    \item a $p_{n-1,f}$ such that $p_{N\backslash i,F}^{ij}=p_{n-1,f}$ for $F\subseteq M\backslash j$ and $0\leq|F|=f<m-1$,
    \item a $p_{s,m-1}$ such that $p_{S,M\backslash j}^{ij}=p_{s,m-1}$ for $S\subseteq N\backslash i$ and $0\leq|S|=s<n-1$,
    \item a $p_{0,f}$ such that $p_{\emptyset,F}^{ij}=p_{0,f}$ for $F\subseteq M\backslash j$ and $0<|F|=f<m-1$,
    \item a $p_{s,0}$ such that $p_{S, \emptyset}^{ij}=p_{s,0}$, for $S\subseteq N\backslash i$ and $0<|S|=s<n-1$,
    \item a $p_{n-1,m-1}$ such that $p_{N\backslash i, M\backslash j}^{ij}=p_{n-1,m-1}$,
    \item a $p_{0,0}$ such that  $p_{\emptyset,\emptyset}^{ij}=p_{0,0}$ which makes the sum of all the weights equals to 1.
\end{itemize}
\end{enumerate}
\end{proof}
Finally add the 2d-efficiency axiom and obtain the uniqueness of $\alg$.
\begin{lemma}\label{lemma: efficiency}
Assume 
Lemma \ref{lemma:linearity and dummy} holds.
Then $\psi_{ij}(h)$ satisfies the 2d-efficiency axiom if and only if 
\begin{equation}\label{efficiency 1}
    \sum_{\substack{i\in N\\j\in M}}p_{N\backslash i,M\backslash j}^{ij}=1,
\end{equation}
\begin{equation}\label{efficiency 2}
    \sum_{\substack{i\in S\\j\in F}}p_{S\backslash i,F\backslash j}^{ij}+\sum_{\substack{i\notin S\\j\notin F}}p_{S,F}^{ij}
    -\sum_{\substack{i\notin S\\j\in F}}p_{S,F\backslash j}^{ij}
    -\sum_{\substack{i\in S\\j\notin F}}p_{S\backslash i,F}^{ij}=0,
\end{equation}
where $S\subsetneq N$ or $F\subsetneq M$.
\end{lemma}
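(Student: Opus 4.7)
The plan is to read the efficiency axiom $\sum_{i\in N,\,j\in M}\psi_{ij}(h)=h(N,M)$ as an identity in the free parameters $\{h(T,G)\}_{T\subseteq N,\,G\subseteq M}$ and to extract one scalar constraint on the coefficients $p^{ij}_{S,F}$ per pair $(T,G)$ by matching coefficients on both sides. The two identities (\ref{efficiency 1}) and (\ref{efficiency 2}) should then fall out from the coefficient of $h(N,M)$ and of $h(T,G)$ with $(T,G)\neq(N,M)$, respectively.

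First I would substitute the formula from Lemma \ref{lemma:linearity and dummy} into $\sum_{i,j}\psi_{ij}(h)$, getting a quadruple sum indexed by $(i,j,S,F)$ with four bracketed terms of the form $\pm h(\cdot,\cdot)$. Fixing a target pair $(T,G)$, I would then ask for each of the four terms: which quadruples $(i,j,S,F)$ make that term equal to $h(T,G)$? Each case reduces to a set-theoretic condition tying $(i,j,S,F)$ to $(T,G)$. For instance, $h(S\cup i,F\cup j)=h(T,G)$ forces $i\in T$, $j\in G$, $S=T\backslash i$, $F=G\backslash j$, and contributes $+p^{ij}_{T\backslash i,G\backslash j}$; the $h(S,F)$ term contributes $+p^{ij}_{T,G}$ when $i\notin T$ and $j\notin G$; and the two negative terms $-h(S\cup i,F)$ and $-h(S,F\cup j)$ contribute $-p^{ij}_{T\backslash i,G}$ and $-p^{ij}_{T,G\backslash j}$ under the analogous mixed conditions.

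Summing these four case contributions for a fixed $(T,G)$ yields exactly the left-hand side of (\ref{efficiency 2}) (with $(T,G)$ in place of $(S,F)$). Since the indicator utilities $W_{T,G}$ built in Appendix \ref{appendix: proof for equivalence} form a basis of $G$ and $\psi_{ij}$ is $h$-linear by Axiom \ref{axiom: linear}, the efficiency axiom is equivalent to these coefficients matching those of $h(N,M)$ on the right-hand side, namely $1$ for $(T,G)=(N,M)$ and $0$ otherwise. The case $(T,G)=(N,M)$ collapses because $i\notin N$ and $j\notin M$ are impossible, so only the first of the four sums survives and gives condition (\ref{efficiency 1}); every other $(T,G)$ gives condition (\ref{efficiency 2}).

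The main obstacle is the combinatorial bookkeeping in the coefficient extraction: one has to keep the four set-theoretic side conditions ($i\in T$ vs.\ $i\notin T$, and analogously for $j$) straight across the four terms, and verify that no quadruple $(i,j,S,F)$ gets double-counted as contributing to two different target pairs $(T,G)$. Once this is done, the ``if and only if'' is a purely linear-algebraic consequence of the $W_{T,G}$'s spanning $G$, so no further work beyond coefficient comparison is needed.
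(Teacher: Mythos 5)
Your argument is correct, and its computational core -- collecting, for each fixed pair $(T,G)$, the four set-theoretic cases under which a quadruple $(i,j,S,F)$ contributes $\pm p^{ij}_{\cdot,\cdot}$ to the coefficient of $h(T,G)$ in $\sum_{i,j}\psi_{ij}(h)$ -- is exactly the rearrangement the paper performs to prove sufficiency. Where you diverge is in the necessity direction: you observe that the values $\{h(T,G)\}$ are free coordinates on $G$ (equivalently, that the delta functions $W_{T,G}$ form a basis -- note these are introduced in the proof of Lemma~\ref{lemma:linearity and dummy}, not in Appendix~\ref{appendix: proof for equivalence}, which uses the cumulative indicators $h_{S,F}$ instead), so that the identity $\sum_{i,j}\psi_{ij}(h)=h(N,M)$ for all $h$ is equivalent to coefficient-wise equality, yielding both directions at once. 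The paper instead derives necessity by applying the efficiency axiom to the four cumulative indicator utilities $h_{S,F}$, $\hat h_{S,F}$, $\tilde h_{S,F}$, $\bar h_{S,F}$ and taking the signed combination $h_{S,F}+\hat h_{S,F}-\tilde h_{S,F}-\bar h_{S,F}$, whose total value $\sum_{i,j}\psi_{ij}(\cdot)$ telescopes to the left-hand side of Eq.~(\ref{efficiency 2}) while the right-hand sides evaluate to $1$ or $0$. Your route is the more economical one, since it needs no auxiliary utilities and makes the ``if and only if'' transparent as a statement about a linear functional vanishing on a basis; the paper's route has the mild advantage of staying entirely within evaluations of the axiom on concrete (monotone, game-like) utility functions. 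The only bookkeeping point you should make explicit is that for each fixed $(T,G)$ and each of the four terms the contributing quadruple $(i,j,S,F)$ is uniquely determined by $(i,j)$ (e.g.\ $S=T\backslash i$, $F=G\backslash j$), so no double counting can occur; with that noted, the proof is complete.
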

\begin{proof}
On the one hand, by Eq.~(\ref{efficiency 1}) and Eq.~(\ref{efficiency 2}),
\begin{align*}
    h(N,M)&=\sum_{\substack{S\subseteq N\\F\subseteq M}}h(S,F)[\sum_{\substack{i\in S\\j\in F}}p_{S\backslash i,F\backslash j}^{ij}+\sum_{\substack{i\notin S\\j\notin F}}p_{S,F}^{ij}-\sum_{\substack{i\notin S\\j\in F}}p_{S,F\backslash j}^{ij}-\sum_{\substack{i\in S\\j\notin F}}p_{S\backslash i,F}^{ij}]\\
    &=\sum_{\substack{i\in N\\j\in M}}\sum_{\substack{S\subseteq N\backslash i\\F\subseteq M\backslash j}}p_{S,F}^{ij}[h(S\cup i,F\cup j)+h(S,F)
    -h(S\cup i,F)-h(S,F\cup j)]\\
    &=\sum_{\substack{i\in N\\j\in M}}\psi_{ij}(h).
\end{align*}
On the other hand, recall:
\begin{equation*}
	    \hat{h}_{S,F}(W_1, W_2)=\left\{
	\begin{array}{lr}
		1, if \ S\subsetneq W_1 , F\subsetneq W_2.\\
		0, otherwise.
	\end{array}
	\right.
	\end{equation*}
	and
	\begin{equation*}
	    h_{S,F}(W_1, W_2)=\left\{
	\begin{array}{lr}
		1, if \ S\subseteq W_1 , F\subseteq W_2.\\
		0, otherwise.
	\end{array}
	\right.
	\end{equation*}
Consider two new utility functions
\begin{equation*}
	\tilde{h}_{S,F}(W_1, W_2)=\left\{
	\begin{array}{lr}
		1, if \ S\subsetneq W_1 , F\subseteq W_2,\\
		0, otherwise.
	\end{array}
	\right.
	\end{equation*}
and
\begin{equation*}
	\bar{h}_{S,F}(W_1, W_2)=\left\{
	\begin{array}{lr}
		1, if \ S\subseteq W_1 , F\subsetneq W_2,\\
		0, otherwise.
	\end{array}
	\right.
	\end{equation*}
Then for any $S\subseteq N$, $F\subseteq M$,
\begin{align*}
    &\sum_{\substack{i\in N\\j\in M}}\psi_{ij}(h_{S,F})+\sum_{\substack{i\in N\\j\in M}}\psi_{ij}(\hat{h}_{S,F})-\sum_{\substack{i\in N\\j\in M}}\psi_{ij}(\tilde{h}_{S,F})-\sum_{\substack{i\in N\\j\in M}}\psi_{ij}(\bar{h}_{S,F})\\
    &=\sum_{\substack{i\in S\\j\in F}}p_{S\backslash i,F\backslash j}^{ij}+\sum_{\substack{i\notin S\\j\notin F}}p_{S,F}^{ij}
    -\sum_{\substack{i\notin S\\j\in F}}p_{S,F\backslash j}^{ij}
    -\sum_{\substack{i\in S\\j\notin F}}p_{S\backslash i,F}^{ij}.
\end{align*}
When $S=N$ and $F=M$, 
\begin{align*}
    &\sum_{\substack{i\in N\\j\in M}}\psi_{ij}(h_{N,M})+\sum_{\substack{i\in N\\j\in M}}\psi_{ij}(\hat{h}_{N,M})-\sum_{\substack{i\in N\\j\in M}}\psi_{ij}(\tilde{h}_{N,M})-\sum_{\substack{i\in N\\j\in M}}\psi_{ij}(\bar{h}_{N,M})\\
    &=h_{N,M}(N,M)+\hat{h}_{N,M}(N,M)-\tilde{h}_{N,M}(N,M)-\bar{h}_{N,M}(N,M)\\
    &=1,
\end{align*}
Otherwise,
\begin{align*}
    &\sum_{\substack{i\in N\\j\in M}}\psi_{ij}(h_{S,F})+\sum_{\substack{i\in N\\j\in M}}\psi_{ij}(\hat{h}_{S,F})-\sum_{\substack{i\in N\\j\in M}}\psi_{ij}(\tilde{h}_{S,F})-\sum_{\substack{i\in N\\j\in M}}\psi_{ij}(\bar{h}_{S,F})\\
    &=h_{S,F}(N,M)+\hat{h}_{S,F}(N,M)-\tilde{h}_{S,F}(N,M)-\bar{h}_{S,F}(N,M)\\
    &=0.
\end{align*}
Hence, Eq.~(\ref{efficiency 1}) and Eq.~(\ref{efficiency 2}) can be easily obtained.
\end{proof}

Now, let's  prove Theorem \ref{thm: Representation of 2D Shapley-value}.

\begin{proof}[Proof of Theorem \ref{thm: Representation of 2D Shapley-value}]
By Lemma \ref{lemma: symmetry},
\begin{align*}
     \psi_{ij}(h)&=\sum_{s=0}^{n-1}\sum_{f=0}^{m-1}\sum_{\substack{S\subseteq N\backslash i\\|S|=s}}\sum_{\substack{F\subseteq M\backslash j\\|F|=f}}p_{s,f}[h(S\cup i, F\cup j)+h(S, F)\\
	 &-h(S\cup i, F)-h(S, F\cup j)].
\end{align*}
By Lemma \ref{lemma:linearity and dummy} and Lemma \ref{lemma: efficiency}, we have the following equations:
\begin{equation}\label{linear recursive condition}
    \begin{aligned}
    &\sum_{s=0}^{n-1}\sum_{f=0}^{m-1}\tbinom{n-1}{s}\tbinom{m-1}{f}p_{s,f}=1,\\
    &sf\cdot p_{s-1,f-1}+(n-s)(m-f)\cdot p_{s,f}=(n-s)f\cdot p_{s,f-1}\\
    &+s(m-f)p_{s-1,f}, \ 1\leq s\leq n-1, 1\leq f\leq m-1,\\
    &(m-f)\cdot p_{0,f}=f\cdot p_{0,f-1}, \ 1\leq f\leq m-1,\\
    &(n-s)\cdot p_{s,0}=s\cdot p_{s-1,0}, \ 1\leq s\leq n-1,\\
    &nm\cdot p_{n-1,m-1}=1.
\end{aligned}
\end{equation}
Actually, we can omit the first equation and the conditions are:
\begin{equation}\label{linear recursive condition1}
    \begin{aligned}
    &sf\cdot p_{s-1,f-1}+(n-s)(m-f)\cdot p_{s,f}=(n-s)f\cdot p_{s,f-1}\\
    &+s(m-f)p_{s-1,f}, \ 1\leq s\leq n-1, 1\leq f\leq m-1,\\
    &(m-f)\cdot p_{0,f}=f\cdot p_{0,f-1}, \ 1\leq f\leq m-1,\\
    &(n-s)\cdot p_{s,0}=s\cdot p_{s-1,0}, \ 1\leq s\leq n-1,\\
    &nm\cdot p_{n-1,m-1}=1.
\end{aligned}
\end{equation}
Hence, we have $n\cdot m$ variables and $(m-1)(n-1)+(m-1)+(n-1)+1=n\cdot m$ equations.

Eq.~(\ref{linear recursive condition1}) has a solution:
\begin{equation}\label{solution}
    p_{s,f}=\frac{s!(n-s-1)!}{n!}\cdot\frac{f!(m-f-1)!}{m!}.
\end{equation}
Therefore,
\begin{align*}
    \psi_{ij}(h)&=\sum_{s=0}^{n-1}\sum_{f=0}^{m-1}\sum_{\substack{S\subseteq N\backslash i\\|S|=s}}\sum_{\substack{F\subseteq M\backslash j\\|F|=f}}\frac{s!(n-s-1)!}{n!}\cdot\frac{f!(m-f-1)!}{m!}[h(S\cup i, F\cup j)+h(S, F)\\
	 &-h(S\cup i, F)-h(S, F\cup j)]\\
	 &=\frac{1}{nm}\sum_{s=1}^{n}\sum_{f=1}^{m}\sum_{\substack{S\subseteq N\backslash i\\|S|=s-1}}\sum_{\substack{F\subseteq M\backslash j\\|F|=f-1}}\frac{(s-1)!(n-s)!}{(n-1)!}\cdot\frac{(f-1)!(m-f)!}{(m-1)!}[h(S\cup i, F\cup j)+h(S, F)\\
	 &-h(S\cup i, F)-h(S, F\cup j)]\\
	 &=\frac{1}{nm}\sum_{s=1}^{n}\sum_{f=1}^{m}\frac{1}{\tbinom{n-1}{s-1}\tbinom{m-1}{f-1}}\sum_{(S, F)\in D_{sf}^{ij}}[h(S\cup i, F\cup j)+h(S, F)-h(S\cup i, F)-h(S, F\cup j)]\\
	 &=\frac{1}{nm}\sum_{s=1}^n\sum_{f=1}^m\Delta_{sf}.
\end{align*}
\begin{align*}
    \psi_{ij}(h)=\sum_{s=0}^{n-1}\sum_{f=0}^{m-1}\sum_{\substack{S\subseteq \mathcal{N}\backslash i\\|S|=s}}\sum_{\substack{F\subseteq \mathcal{M}\backslash j\\|F|=f}}&p_{s,f}[h(S\cup i, F\cup j)+h(S, F)\\
    &-h(S\cup i, F)-h(S, F\cup j)]\\   &=\sum_{s=0}^{n-1}\sum_{f=0}^{m-1}\tbinom{n-1}{s}\tbinom{m-1}{f}p_{s,f}\frac{1}{\tbinom{n-1}{s}\tbinom{m-1}{f}}\sum_{(S, F)\in D_{sf}^{ij}}[h(S\cup i, F\cup j)+h(S, F)-h(S\cup i, F)-h(S, F\cup j)]\\
\end{align*}
Now we prove the solution Eq.~(\ref{solution}) is unique.

Convert the Eq.~(\ref{linear recursive condition1}) to matrix equations in the form of $$\mA \rvx=\rvb,$$where $$\rvx^T=(p_{0,0},p_{0,1},\dots,p_{0,m-1},p_{1,0},p_{1,1},\dots,p_{1,m-1},\dots,p_{n-1,0},\dots,p_{n-1,m-1})_{1\times nm},$$ $$\rvb^T=(0,0,0,\dots,0,1)_{1\times nm},$$ and
\begin{equation}
    \mA=
\begin{pmatrix}
\mA_1\\
\mA_2
\end{pmatrix}_{nm\times nm},
\end{equation}
where
\begin{align*}
&\mA_1=
    \begin{pmatrix}
    \mA^0_{(m-1)\times m}&\mO_{(m-1)\times m}&\cdots&\cdots&\mO_{(m-1)\times m}\\
    \\
    \mA^1_{m\times m}&\mB^1_{m\times m}&\mO_{m\times m}&\cdots&\mO_{m\times m}\\
    \\
    \mO_{m\times m}&\mA^2_{m\times m}&\mB^2_{m\times m}&\cdots&\mO_{m\times m}\\
    \\
    \vdots                &\vdots         &\ddots         & \ddots &\vdots\\
    \\
    \mO_{m\times m}&\mO_{m\times m}&\cdots&\mA^{n-1}_{m\times m}&\mB^{n-1}_{m\times m}
    \end{pmatrix}_{(nm-1)\times nm },\\
    \\
    &\mA_2=\Big(0,0,\cdots,0,nm\Big)_{1\times nm}.
\end{align*}
And
\begin{tiny}
\begin{align*}
\mA^0_{(m-1)\times m}&=
    \begin{pmatrix}
    1&-(m-1)&0&\cdots&\cdots&0\\
    \\
    0&2&-(m-2)&0&\cdots&0\\
    \\
    0&0&3&-(m-3)&\cdots&0
    \\
    \vdots& \vdots&\vdots&\ddots&\ddots&\vdots\\
    \\
    0& 0& 0&\cdots&m-1&-1
    \end{pmatrix}_{(m-1)\times m},\\
    ~\\
    \mA^j_{m\times m}&=
    \begin{pmatrix}
    j&0&0&\cdots&\cdots&0\\
    \\
    j&-j\cdot(m-1)&0&0&\cdots&0\\
    \\
    0&2j&-j\cdot(m-2)&0&\cdots&0\\
    \\
    0&0&3j&-j\cdot(m-3)&\cdots&0\\
    \\
    \vdots& \vdots&\vdots&\ddots&\ddots&\vdots\\
    \\
    0& 0& 0&\cdots&j\cdot(m-1)&-j
    \end{pmatrix}_{m\times m},\ 1\leq j\leq n-1,\\
    ~\\
    \mB^j_{m\times m}&=
    \begin{pmatrix}
    -(n-j)&0&0&\cdots&\cdots&0\\
    \\
    -(n-j)&(n-j)\cdot(m-1)&0&0&\cdots&0\\
    \\
    0&-2\cdot (n-j)&(n-j)\cdot(m-2)&0&\cdots&0\\
    \\
    0&0&-3\cdot(n-j)&(n-j)\cdot(m-3)&\cdots&0\\
    \\
    \vdots& \vdots&\vdots&\ddots&\ddots&\vdots\\
    \\
    0& 0& 0&\cdots&-(m-1)\cdot(n-j)&n-j
    \end{pmatrix}_{m\times m},\ 1\leq j\leq n-1.
\end{align*}
\end{tiny}
For example, if $n=m=3$, then 
\begin{equation*}
    \mA=
    \begin{pmatrix}
    \begin{array}{ccc:ccc:ccc}
     1&-2&0&0&0&0&0&0&0\\
    0&2&-1&0&0&0&0&0&0\\ \hline
    1&0&0&-2&0&0&0&0&0\\ 
    1&-2&0&-2&4&0&0&0&0\\
    0&2&-1&0&-4&2&0&0&0\\ \hline
    0&0&0&2&0&0&-1&0&0\\
    0&0&0&2&-4&0&-1&2&0\\
    0&0&0&0&4&-2&0&-2&1\\ \hline
    0&0&0&0&0&0&0&0&9
    \end{array}
    \end{pmatrix}_{9\times9}
\end{equation*}
Convert $\mA$ to $\hat{\mA}$ by using the elementary column and row transformation,
\begin{equation*}
    \hat{\mA}=
    \begin{pmatrix}
    \begin{array}{ccc:ccc:ccc}
    1&-2&0&2&-4&0&1&-2&0\\
    0&2&-1&0&4&-2&0&2&-1\\\hline
    1&0&0&0&0&0&0&0&0\\
    1&-2&0&0&0&0&0&0&0\\
    0&2&-1&0&0&0&0&0&0\\\hline
    0&0&0&1&0&0&0&0&0\\
    0&0&0&1&-2&0&0&0&0\\
    0&0&0&0&2&-1&0&0&0\\\hline
    0&0&0&0&0&0&0&0&1
    \end{array}
    \end{pmatrix}_{9\times9}.
\end{equation*}
According to the property of the elementary row and column transformation, $$Rank(\mA)=Rank(\hat{\mA}).$$ 
Consider equation $$\hat{\mA}\rvx=\mathbf{0},$$
and the solution is only $\rvx=\mathbf{0}$, hence $$Rank(\mA)=Rank(\hat{\mA})=9.$$ 

In general, we can prove $Rank(\mA)=nm$ always holds for any $n\geq 1$ and $m\geq 1$, (Make elementary column transformation for $[\mA^j_{m\times m}, \mB^j_{m\times m}]$ in the context of $\mA$ with the order of $j=1,2,\dots, n-1$.) Hence the solution of Eq.~(\ref{linear recursive condition1}) is unique, which is shown in Eq.~(\ref{solution}). And we can check Eq.~(\ref{solution}) also satisfies Eq.~(\ref{linear recursive condition}), hence the solution of Eq.~(\ref{linear recursive condition}) is unique.
\end{proof}

\section{Proof of Corollary~\ref{cor: 2d to 1d}}\label{appendix: cor}
\begin{proof}
We use the same technique in the proof of Lemma~\ref{lemma: efficiency}.
\begin{align*}
    \psi_{i\cdot}^{1d}(h)&=\sum_{j\in M}\sum_{\substack{S\subseteq N\backslash i\\F\subseteq M\backslash j}}p_{s,f}[h(S\cup i, F\cup j)+h(S,F)-h(S\cup i,F)-h(S,F\cup j)]\\
    &=\sum_{\substack{S\subseteq N\backslash i\\F\subseteq M}}h(S\cup i,F)[\sum_{j\in F}p_{s,f-1}-\sum_{j\notin F}p_{s,f}]+h(S,F)[\sum_{j\notin F}p_{s,f}-\sum_{j\in F}p_{s,f-1}]\\
    &=\sum_{\substack{S\subseteq N\backslash i\\F\subseteq M}}(\sum_{j\in F}p_{s,f-1}-\sum_{j\notin F}p_{s,f})[h(S\cup i,F)-h(S,F)]\\
    &=\sum_{S\subseteq N\backslash i}(\sum_{j\in M}p_{s,m-1})[h(S\cup i,M)-h(S,M)].
\end{align*}
Substitute Eq.~(\ref{solution}) into the above equation and we get the conclusion. The similar argument can be applied to $\psi_{\cdot j}^{1d}$.
\end{proof}

\section{Permutation-based $\alg$ Formulation}
To compute $\alg$ more efficient, we propose the following corollary.
\begin{corollary} \label{cor:eff_2d}
Eq.~(\ref{eqn: 2D Shapley}) has an equivalent form as follows:
\begin{align}\label{Eq:2dShap_set_express}
    &\psi_{ij}^{2d}=\frac{1}{nm}\sum_{\substack{S\subseteq N\backslash i\\F\subseteq M\backslash j}} \frac{[h(S\cup i, F\cup j)+h(S, F)-h(S\cup i, F)-h(S, F\cup j)]}{\tbinom{n-1}{|S|}\tbinom{m-1}{|F|}},
\end{align}
or
\begin{align} \label{eq:2dperm}
    &\psi_{ij}^{2d}=\frac{1}{n!m!}\sum_{\substack{\pi_1 \in \Pi(N) \\ \pi_2 \in \Pi(M) }} 
    [h(P_{i}^{\pi_1} \cup i, P_{j}^{\pi_2} \cup j) + h(P_{i}^{\pi_1}, P_{j}^{\pi_2}) - h(P_{i}^{\pi_1}\cup i, P_{j}^{\pi_2} )- h(P_{i}^{\pi_1}, P_{j}^{\pi_2}\cup j )],
\end{align}
where $\Pi(A)$ denotes a set of all permutations of $A$ and $P_{k}^{\pi}$ a set of all elements of $A$ that precede $k \in A$ in the permutation $\pi \in \Pi(A)$. 
\end{corollary}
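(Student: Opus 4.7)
The plan is to derive both equivalent forms from the representation in Eq.~(\ref{eqn: 2D Shapley}) by combinatorial regrouping. Since Theorem~\ref{thm: Representation of 2D Shapley-value} already establishes
\[
\psi_{ij}^{2d}=\sum_{s=0}^{n-1}\sum_{f=0}^{m-1}\sum_{\substack{S\subseteq N\backslash i\\|S|=s}}\sum_{\substack{F\subseteq M\backslash j\\|F|=f}} p_{s,f}\, M_h^{i,j}(S,F),
\]
with $p_{s,f}=\tfrac{s!(n-s-1)!}{n!}\cdot\tfrac{f!(m-f-1)!}{m!}$, the proof is essentially a careful matching of weights.

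For the first equality Eq.~(\ref{Eq:2dShap_set_express}), I would drop the enumeration over sizes $s,f$ by indexing directly on the subsets themselves: every pair $(S,F)\in D_{sf}^{ij}$ satisfies $|S|=s-1$ and $|F|=f-1$, so the factors $\binom{n-1}{s-1}$ and $\binom{m-1}{f-1}$ can be rewritten as $\binom{n-1}{|S|}$ and $\binom{m-1}{|F|}$. Absorbing the prefactor $1/(nm)$ yields the weight $\tfrac{1}{nm\binom{n-1}{|S|}\binom{m-1}{|F|}}$, which is exactly $p_{|S|,|F|}$ after expanding the binomial coefficients. This shows the desired identity is nothing more than a reindexing.

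For the permutation form Eq.~(\ref{eq:2dperm}), the key combinatorial observation is that for any fixed $S\subseteq N\setminus i$ with $|S|=s-1$, the number of permutations $\pi_1\in\Pi(N)$ with $P_i^{\pi_1}=S$ equals $(s-1)!\,(n-s)!$ (order the $s-1$ elements of $S$ before $i$, then the remaining $n-s$ elements after $i$). Applying the same counting to $F$ and $\pi_2$, the double sum over $(\pi_1,\pi_2)$ collapses to a double sum over $(S,F)$ weighted by $(s-1)!(n-s)!(f-1)!(m-f)!/(n!m!)$. I would then check that this weight matches $\tfrac{1}{nm\binom{n-1}{|S|}\binom{m-1}{|F|}}$, which is immediate from the definition of binomial coefficients. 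Together with Eq.~(\ref{Eq:2dShap_set_express}) this gives Eq.~(\ref{eq:2dperm}).

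There is no genuine obstacle here beyond bookkeeping: both equivalences follow from the standard ``subset versus permutation'' dictionary used in the classical proof that the Shapley value admits a permutation representation, adapted independently to the row and column dimensions. The only thing to watch is the indexing shift between $s,f$ (which run from $1$ to $n,m$ in $\Delta_{sf}$) and $|S|,|F|$ (which run from $0$ to $n-1,m-1$) so that the counts $(|S|)!(n-|S|-1)!(|F|)!(m-|F|-1)!$ are aligned with the weights coming from $p_{s-1,f-1}$.
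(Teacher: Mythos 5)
Your proposal is correct and matches the paper's treatment: the paper likewise regards Eq.~(\ref{Eq:2dShap_set_express}) as a direct reindexing of Eq.~(\ref{eqn: 2D Shapley}) and Eq.~(\ref{eq:2dperm}) as the standard subset-versus-permutation correspondence applied independently to rows and columns, with the count $(s-1)!(n-s)!$ of permutations satisfying $P_i^{\pi_1}=S$ matching the weight $1/\bigl(n\tbinom{n-1}{|S|}\bigr)$. Your attention to the shift between the size indices $s,f$ in $\Delta_{sf}$ and the cardinalities $|S|,|F|$ is exactly the only bookkeeping point that needs care.
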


The formulation in Eq.~(\ref{Eq:2dShap_set_express}) is a simple derivation from Eq.~(\ref{eqn: 2D Shapley}) that sums marginal contributions over all subsets. Whereas, the second formulation in Eq.~(\ref{eq:2dperm}) sums over all sample and feature permutations, and the marginal contribution of block $(i,j)$ is weighted by a coefficient that measures all orderings of samples appearing before and after sample $i$ and all orderings of features appearing before and after feature $j$.
This corollary gives a simple expression of $\alg$. Using this equivalent formulation, we can design efficient algorithms for $\alg$ implementation.

\section{Algorithm Details} \label{sec:algo}

Here, we explain the implementation of algorithms and explore ways to achieve efficient computation.

\subsection{Saving Computation in $\algmc$}
 First, we focus on $\algmc$. 
Apart from Monte Carlo sampling on both sample and feature permutations to reduce complexity, we also reduce the number of model training to a single time for each counterfactual evaluation as opposed to $4$, which is derived in Eq.~\ref{eq:margin}. 
Let us observe that in the marginal contribution equation, we have $4$ utility terms, but actually, $3$ of them are already computed, which we can reuse them. 
We take a pair $(i,j)$ as an example. 
For the marginal contribution of $(i,j)$, we have 4 utility terms to compute: $h(S\cup i, F\cup j), h(S, F\cup j), h(S\cup i, F), h(S, F)$. However, we notice that $h(S, F\cup j)$ was already computed for a pair $(i-1,j)$, $h(S\cup i, F)$ for a pair $(i,j-1)$, and $h(S, F)$ for $(i-1,j-1)$. Therefore, by saving these evaluations, we can reduce the total number of model training by $75\%$. 
Saving all model evaluations for every block might overflow the memory. However, we only need to save the utilities of the previous and current rows (columns) if we are looping horizontally downwards (vertically rightwards), which promotes efficient memory usage. Additionally, our algorithm can be parallelized. In particular, every permutation can be computed independently and combined at the last stage, which is the ``while loop'' in Algorithm~\ref{alg:2d-mc}.

\subsection{Limitations of $\algmc$ and Possible Improvements}
\edit{
One limitation of the Monte Carlo method is time complexity which scales with the number of rows and columns in an aggregate data matrix. To improve the efficiency of $\algmc$, we can reduce the burden on model retraining of $\algmc$ to lower the computation cost. For example, there exist highly efficient methods for model re-training, such as FFCV [1,2], which has been applied in Datamodels [3] and can significantly reduce computation complexity. Another limitation is that $\algmc$  relies on the performance scores associated with models trained on different subsets to determine the cell values. However, these values are susceptible to noise due to training stochasticity when the learning algorithm is randomized (e.g., SGD) \citep{wang2022data}.
To overcome these limitations, we proposed an efficient, nearest-neighbor-based method, $\algknn$, which involves no model training and only requires sorting data. With this method, we also avoid the problem of model training stochasticity, which $\algmc$ is facing with. Another advantage of $\algknn$ is that it has an explicit formulation for sample values and only requires permuting over features. This method not only beats $\algmc$ by an order of magnitude in terms of computational efficiency but is straightforward to compute and only requires CPU resources.}



\subsection{Saving Computation in $\algknn$}

Apart from removing the dependency on the sample permutations and all model training, $\algknn$ can further be reduced in computation. Similar to the $\algmc$, we here also save the utility terms, as shown in Algorithm~\ref{alg:2d-knn}. For each pair $(i,j)$, we need to compute $SV_{KNN}(i, P_j^{\pi} \cup k)$ and  $SV_{KNN}(i, P^{\pi}_j)$. However, the second term was already calculated for the previous feature in $\pi$ prior to $j$. Thus, we can reduce the total number of $SV_{KNN}$ evaluations by $50\%$.

\begin{algorithm}[ht]
\SetAlgoLined
\SetKwInput{Input}{Input}
\SetKwInput{Output}{Output}
\SetKwInput{Ensure}{Ensure}
\Input{ Training Set $D$, Learning Algorithm $\mathcal{A}$, Test Set $T$, Utility Function $h$.}
\Output{ Sample-Feature 2D Shapley Values $\psi^{2d}$.}
\Ensure{$\forall_{i,j}$, $\psi_{ij}^{2d}=0$; $t=0$.} 
\While {$\psi^{2d}$ not converged} {
    \indent $\pi_N \leftarrow $ Random Samples Permutation  
    
    $\pi_M \leftarrow $ Random Features Permutation
    
    $u \leftarrow \textbf{0}$ \small{\tcp{Utility Matrix}}
    \For {$i,j$ in range($\pi_N), range(\pi_M$)} {
         $s \leftarrow \pi_N(i), f \leftarrow \pi_M(j)$\\
         
         $u[s,f] \leftarrow h\left(P^{\pi_N}_s \cup \{s\},P^{\pi_M}_f \cup \{f\}\right)$\\
         
         $\psi_{sf}^{new} \leftarrow u[s,f] + u[\pi_N(i-1),\pi_M(j-1)] - u[\pi_N(i),\pi_M(j-1)] - u[\pi_N(i-1),\pi_M(j)]$\\
         
         $\psi_{sf}^{2d} \leftarrow \frac{t}{t+1} \psi_{sf}^{2d} + \frac{1}{t+1} \psi_{sf}^{new} $
    }
    Set $t \leftarrow t+1$
}
\caption{ $\algmc$ Valuation Algorithm.}
\label{alg:2d-mc}
\end{algorithm}


\begin{algorithm}[ht]
\SetAlgoLined
\SetKwInput{Input}{Input}
\SetKwInput{Output}{Output}
\SetKwInput{Ensure}{Ensure}
\Input{ Training Set $D$, Test Set $T$, Top $K$.}
\Output{ Sample-Feature 2D Shapley Values $\psi^{2d}$.}
\Ensure{$\forall_{i,j}$, $\psi_{ij}^{2d}=0$; $t=0$.} 
\While {$\psi^{2d}$ not converged} { 
    
    $\pi_M \leftarrow $ Random Features Permutation 
    
    $u \leftarrow \textbf{0}$ \small{\tcp{$SV_{knn}$ values}}
    \For {$j$ in range($\pi_M$)} {
         $f \leftarrow \pi_M(j)$\\
         
         $u[f] \leftarrow SV_{KNN}(N,P_m^{\pi_M} \cup \{f\}, T)$\\
         
         $\psi_{sf}^{new} \leftarrow u[f]_s - u[\pi_M(j-1)]_s$ \\
         
         $\psi_{sf}^{2d} \leftarrow \frac{t}{t+1} \psi_{sf}^{2d} + \frac{1}{t+1} \psi_{sf}^{new} $
    }
    Set $t \leftarrow t+1$
}
\caption{ $\algknn$ Valuation Algorithm.}
\label{alg:2d-knn}
\end{algorithm}

\subsection{Actual Runtime Complexity}

\edit{Time complexity is an important aspect when evaluating the efficiency of algorithms. In our case, we focus on determining the runtime of our methods for different number of cell valuations on the Census dataset until the values' convergence is achieved. While computing the runtime for the exact 2D Shapley runtime, we encounter a challenge due to the exponential growth of permutations with the cell size, making exact 2D Shapley intractable to compute. 
To address this, we benchmark the exact 2D Shapley runtime, by measuring the runtime for a single permutation and scale it by the total number of permutations needed for the exact 2D Shapley. As we observe in Table~\ref{fig:runtime}, $\algknn$, exhibits exceptional efficiency compared to $\algmc$ 
across various cell valuations on the Census dataset. At 1,000 cells valuation, $\algknn$ was at least 25 times faster than $\algmc$, showcasing a substantial advantage. Furthermore, as the number of cells increased to 100,000, $\algknn$ demonstrates a remarkable speed advantage, being approximately 300 times faster than $\algmc$. These findings clearly establish an advantage of $\algknn$ over $\algmc$ in terms of runtime efficiency. Moreover, we observe that both $\algknn$ and $\algmc$ outperform the exact 2D Shapley method in terms of runtime.
These results highlight the effectiveness and practicality of our approach for computing 2D-Shapley in real-world cases.}

\begin{table}[H]
\centering{
\begin{tabular}{l|c|c|c|c|c|c}

\textbf{Method}   & \textbf{1K} & \textbf{5K} & \textbf{10K} & \textbf{20K} & \textbf{50K} & \textbf{100K} \\ \midrule
\begin{tabular}{@{}l@{}}2D Shapley-Exact \\ (Theoretical)\end{tabular}   & 1.5E+301s         & 2.0E+1505s &	2.8E+3010s 	&5.6E+6020s 	& 4.4E+15051s 	&1.4E+30103s       \\
$\algmc$     & 280s 	& 1,661s 	& 3,127s 	& 9,258s 	 & 17,786s 	& 26,209s      \\ 
$\algknn$ & 11s 	& 25s 	& 37s 	 & 44s & 	53s 	& 88s       \\ 
\end{tabular}
}
\caption{Actual runtime comparison between 2D-Shapley methods.}
\label{fig:runtime}
\end{table}



\section{Implementation Details \& Results} \label{sec:res}


\subsection{Details on Datasets and Models}

For our experiments, we use the following datasets from Machine Learning Repository~\cite{Dua:2019}:

\begin{table}[H]
\centering{
\begin{tabular}{l|c|c|c}

\textbf{Dataset}   & \textbf{Training Data} & \textbf{Test Data} & \textbf{Features} \\ \midrule
Census Income                      & 32561         & 16281     & 14       \\
Default of Credit Card Clients     & 18000         & 12000     & 24       \\ 
Heart Failure & 512           & 513       & 13       \\ 
Breast Cancer Wisconsin (Original) & 242           & 241       & 10       \\ 
Wine Dataset & 106           & 72       & 13       \\ 
\end{tabular}
}
\caption{Details on datasets used in experiments.}
\label{fig:datasets}
\end{table}

In Breast Cancer Wisconsin dataset, we removed ``ID number'' from the list of features as it was irrelevant for model training.

For methods requiring model training, $\oned$, \texttt{Random}, and $\algmc$, we implemented a decision tree classifier on all of them.

Empirically, we verified that for each of the method, the cell values converge within $500$ permutations and that is the number we decide to use to run these methods.

Due to varying sizes of each dataset with different number of features, we set a different number of cells to be removed at a time. For bigger datasets, Census Income and Credit Default, we remove ten cells at a time, and for a smaller dataset, Breast Cancer, we remove one cell at a time.

\subsection{Additional Results on \emph{Sanity check of cell-wise values} experiment} \label{sec:sanity}

\begin{figure}[h]
\begin{center}
  \includegraphics[width=450pt]{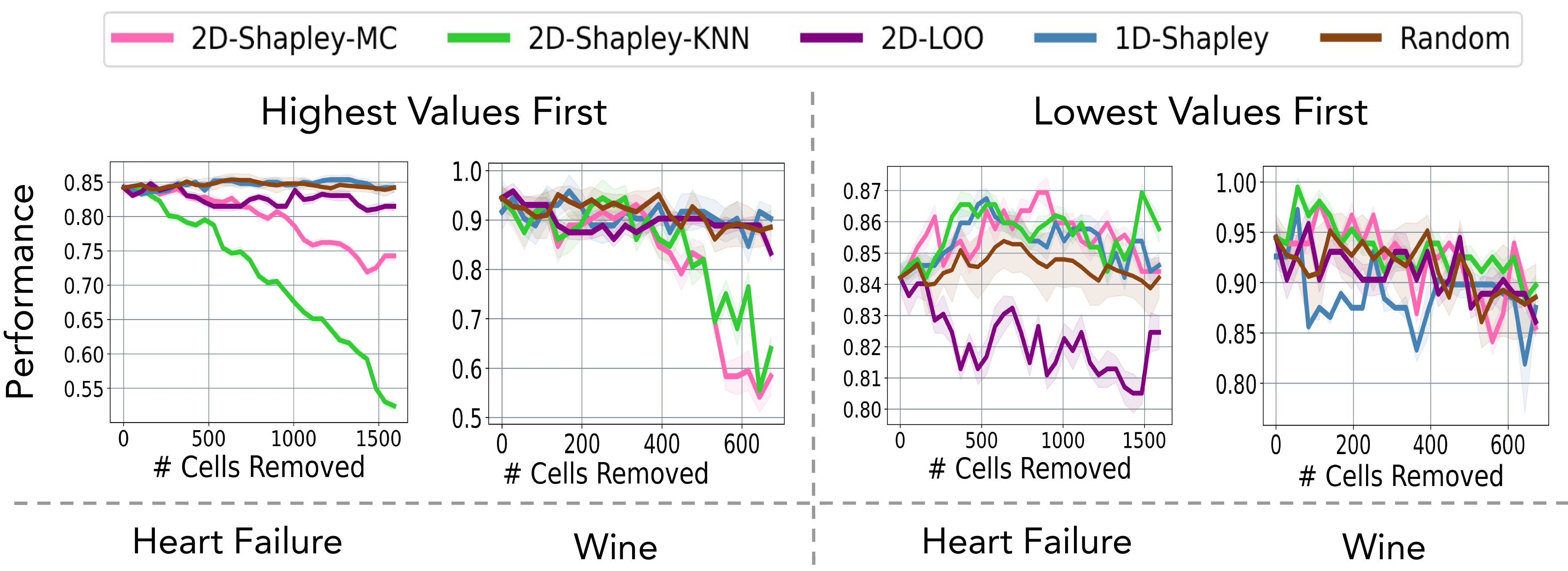}
  \caption{$\alg$ values for benign patients in the original breast cancer dataset. The green border denotes a cell before an outlier value has been injected to that cell.}~\label{fig:heart_wine}
  \end{center}
\end{figure}

\edit{We provide results on additional datasets, Heart Failure and Wine Dataset, to demonstrate the effectiveness of $\alg$ in cell-wise valuation. We additionally include the 2D LOO baseline for comparison. As we can observe in Figure~\ref{fig:heart_wine}, 2D LOO performance is comparable to or worse than the Random baseline. One of the main reasons is that 2D LOO only valuates a cell's contribution when all other cells are present. This means that after the sequential removal of some cells, the values obtained from 2D LOO may no longer accurately represent the importance of the cells. In contrast, our method computes a cell's value by averaging its contribution over various sample and feature subset sizes, which ensures our cell values are informative even after the sequential removal of a certain amount of cells, thereby addressing the shortcomings of 2D LOO and leading to improved performance in cell-wise valuation.
}

\subsection{Additional Details and Results on \emph{Fine-Grained Outlier Localization} experiment} \label{sec:outlier}


\subsubsection{Outlier Value Generation}

Our outlier generation technique is inspired by~\cite{du2022vos}. Specifically, for a random cell with a sample index $i$ and a feature index $j$, we generate an outlier value based on its feature $j$. We first recreate a distribution of the feature $j$ and then sample a value from a low-probability-density region, below $5\%$ in our experiment. 

\subsubsection{Heatmaps Comparison}

To better understand the detection rate of outlier values, we visualize them through a heatmap. In Figure~\ref{fig:bc_clean_heatmap}, we provide a $\alg$ heatmap of the original dataset before outlier injection and compare with a $\alg$ heatmap in Figure~\ref{fig:bc_2d_out_heatmap} after injecting outliers. Due to dimensional reasons, we transpose the heatmap, where the rows represent features and the columns denote the samples.

We observe through the breast cancer dataset that the cells with injected outliers have changed their values and lie mostly in the lower range of $\alg$ values. However, we can also notice that other cells are also affected by the outliers and the overall range of values has increased in both directions.

\begin{figure}[htb]
\begin{center}
  \includegraphics[width=400pt]{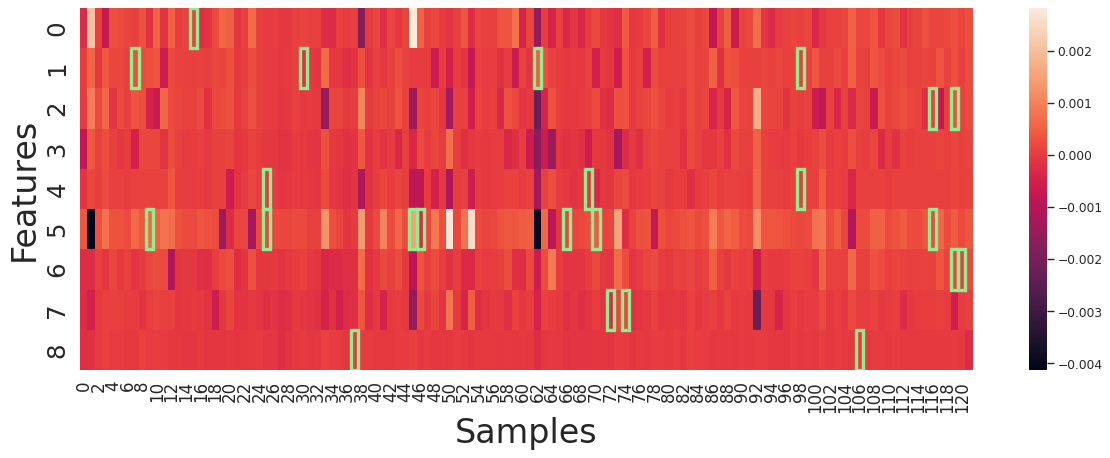}
  \caption{$\alg$ values for benign patients in the original breast cancer dataset. The green border denotes a cell before an outlier value has been injected to that cell.}~\label{fig:bc_clean_heatmap}
  \end{center}
\end{figure}

\begin{figure}[htb]
\begin{center}
  \includegraphics[width=400pt]{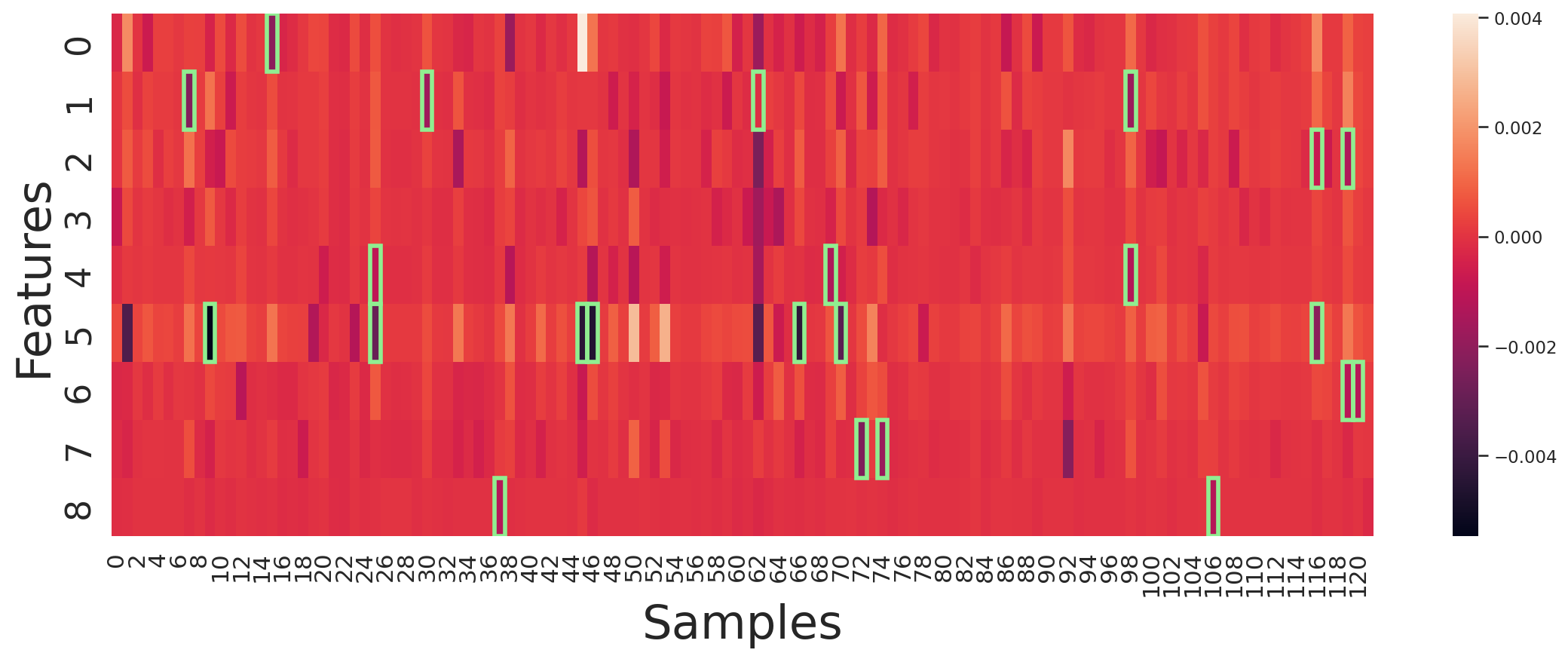}
  \caption{$\alg$ values for benign patients in the breast cancer dataset with randomly inserted outliers. The green border denotes a cell after an outlier value has been injected to that cell.}~\label{fig:bc_2d_out_heatmap}
  \end{center}
\end{figure}

\begin{figure}[htb]
\begin{center}
  \includegraphics[width=400pt]{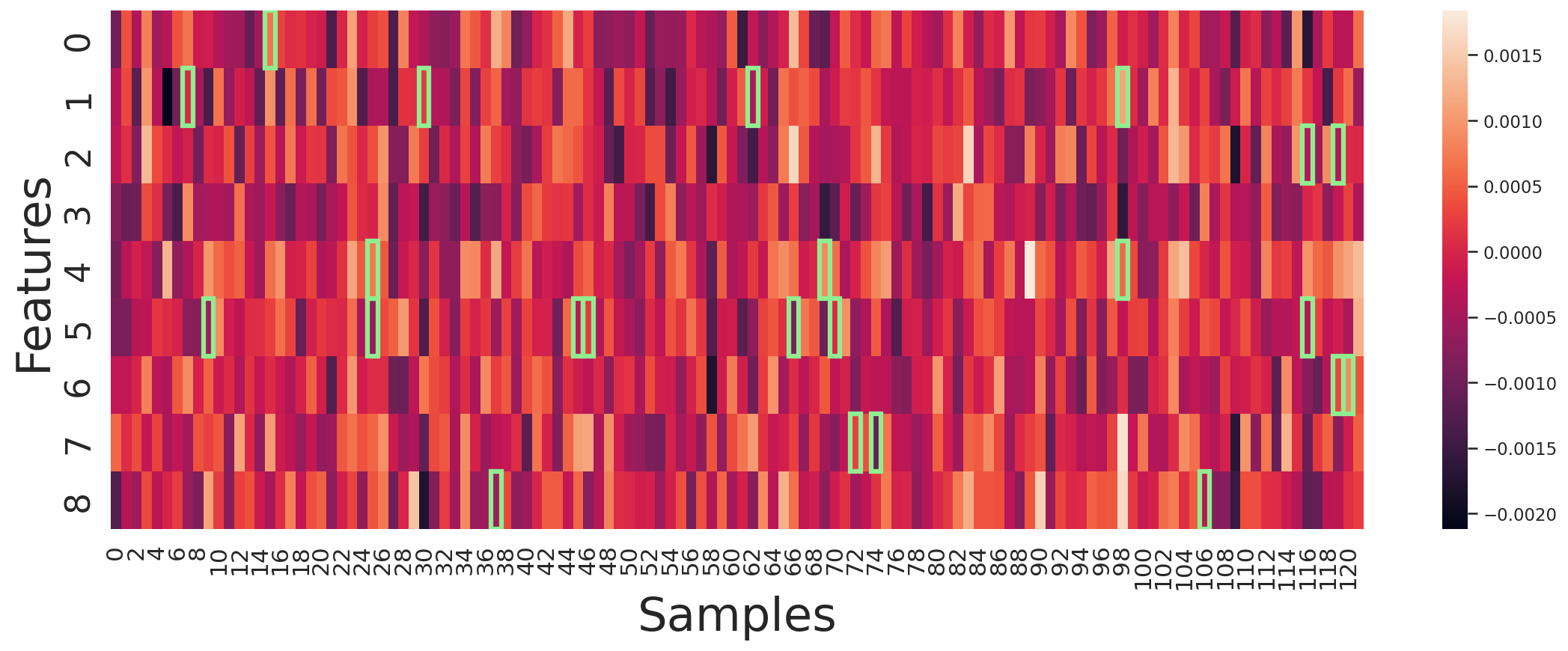}
  \caption{$\oned$ values for benign patients in the breast cancer dataset with randomly inserted outliers. The green border denotes a cell after an outlier value has been injected to that cell.}~\label{fig:bc_1d_out_heatmap}
  \end{center}
\end{figure}

In addition, we present a heatmap with injected outliers generated by $\oned$ to provide insights into the $\oned$ detection performance, which we show in Figure~\ref{fig:census_breast_outlier_detection}A). As we can observe the $\oned$ heatmap in Figure~\ref{fig:bc_1d_out_heatmap}, the values of injected outliers are scattered which explains why the detection rate by $\oned$ was suboptimal.

\subsubsection{Ablation Study on the Budget of Inserted Outliers}

In Figure~\ref{fig:census_breast_outlier_detection}A), we injected outlier values to $2\%$ of total cells. Here, we explore whether our $\alg$ method can still detect outliers on various different amount of outliers. Thus, we randomly inject $1\%,2\%,5\%,10\%,15\%$ of outlier values to the original breast cancer dataset and plot the detection rate. 

\begin{figure}[htb]
\begin{center}
  \includegraphics[width=300pt]{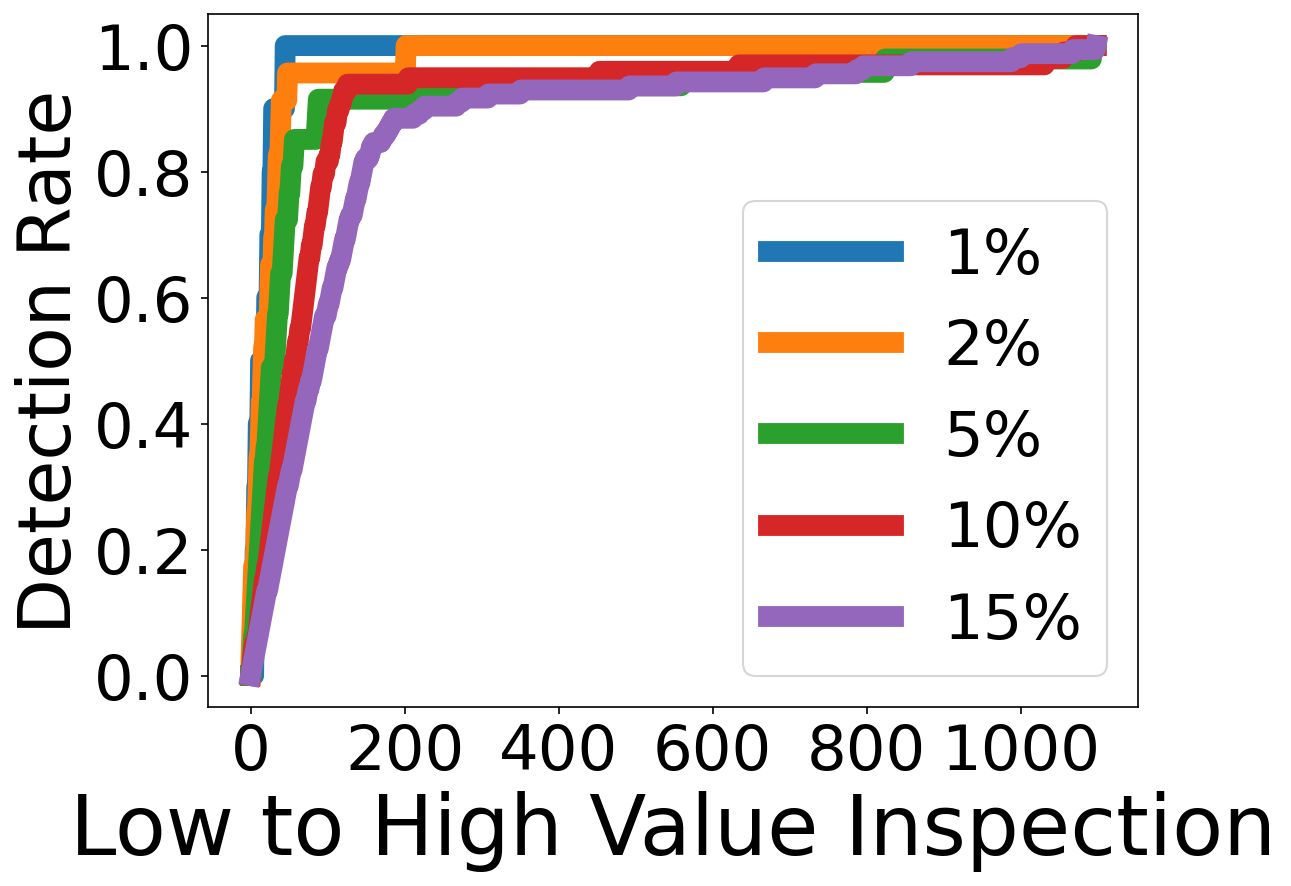}
  \caption{$\alg$ Detection rate of randomly inserted outliers in the breast cancer dataset over various injection rates.}~\label{fig:many_bc_outlier}
  \end{center}
\end{figure}

As we observe in Figure~\ref{fig:many_bc_outlier}, the detection rate of outliers is very high within the first $200$ inspected cells for every outlier injection rate. Further, we observe that with more outliers added to the dataset, our detection rate slightly decreases. It is indeed reasonable, since  as we inject more outliers in the dataset, the less uncommon these outliers are.




\subsection{Additional Details on \emph{Sub-matrix Valuation} experiment}





For the plots in Figure~\ref{fig:2d_vs_perf}, we have randomly split the Credit Default dataset into blocks. One of the random split is pictured in Figure~\ref{fig:block_split}. We randomly moved the horizontal and vertical lines and permuted separately rows and columns to create different possibilities for block splits.

\begin{figure}[htb]
\begin{center}
  \includegraphics[width=200pt]{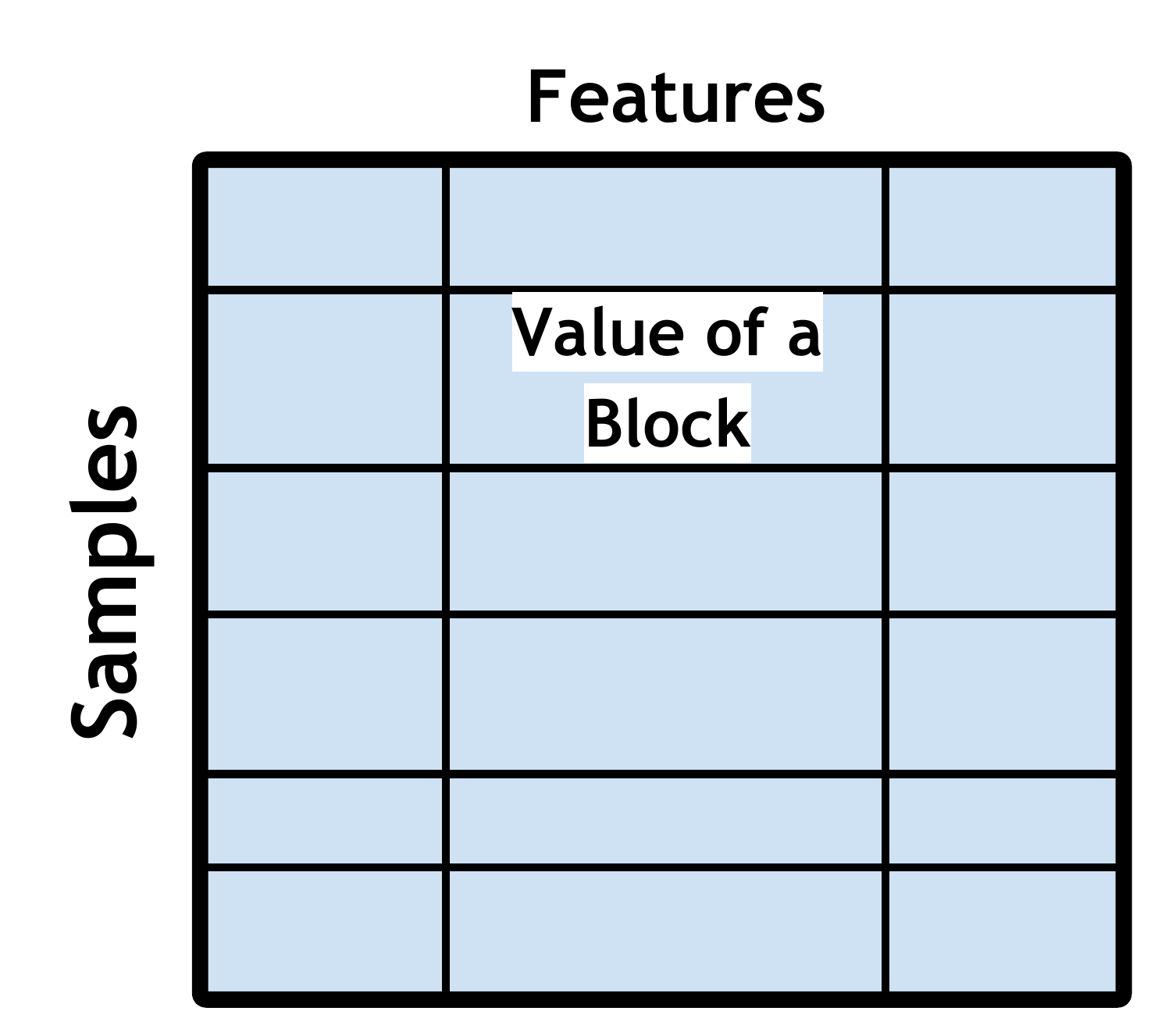}
  \caption{An example of a dataset split into blocks.}~\label{fig:block_split}
  \end{center}
\end{figure}

\subsection{Hardware}

In this work, we used an 8-Core Intel Xeon Processor E5-2620 v4 @ 2.20Ghz CPU server as a hardware platform.

\subsection{Code}

The code repository is available via this link \href{https://github.com/ruoxi-jia-group/2dshapley}{https://github.com/ruoxi-jia-group/2dshapley}.

\vfill
\end{appendices}

%

\end{document}